\newtheorem{theorem}{Theorem}[section]
\newtheorem{corollary}[theorem]{Corollary}
\newtheorem{lemma}[theorem]{Lemma}
\newtheorem{proposition}[theorem]{Proposition}
\newcommand{\BlackBox}{\rule{1.5ex}{1.5ex}}
\renewenvironment{proof}{\par\noindent{\bf Proof.\ }}{\hfill\BlackBox\\[2mm]}
\long\def\acks#1{\vskip 0.3in\noindent{\large\bf Acknowledgments}\vskip 0.2in \noindent #1}
\def\B{\mathcal{B}}
\def\cX{\mathcal X}
\def\RR{\mathbb R}
\def\bE{\mathbf E}
\def\bx{\mathbf x}
\def\by{\mathbf y}
\def\bc{\mathbf c}
\def\bK{\mathbf K}
\def\L2p{{L^2_{\rho_{\!_\cX}}}}
\def\H{\mathcal H}
\def\calX{\mathcal X}
\def\calH{\mathcal H}
\def\calO{\mathcal O}
\def\dint{\displaystyle\int}
\def\mse{{\rm mse}}
\let\citep\cite
\begin{document}

\title{\bf Learning Theory of Distributed Regression with Bias Corrected Regularization Kernel Network}

\author{{\bf Zheng-Chu Guo}\\
\small School of Mathematical Sciences, Zhejiang University, \\
\small Hangzhou 310027, P. R. China\\
\small Email: guozhengchu@zju.edu.cn \\
 \\
{\bf Lei Shi}\\
\small Shanghai Key Laboratory for Contemporary Applied Mathematics, \\
\small School of Mathematical Sciences, Fudan University, \\
\small Shanghai 200433, P. R. China\\
\small Email: leishi@fudan.edu.cn \\
\\
{\bf Qiang Wu} \\
\small Department of Mathematical Sciences, Middle Tennessee State University, \\
\small Murfreesboro, TN 37132, USA\\
\small Email: qwu@mtsu.edu
}
\date{}

\maketitle

\begin{abstract}%
Distributed learning is an effective way to analyze big data. In distributed regression, a typical approach is to
divide the big data into multiple blocks, apply a base regression algorithm on each of them,
and then simply average the output functions learnt from these blocks.
Since the average process will decrease the variance, not the bias, bias correction
is expected to improve the learning performance if the base regression algorithm is a biased one.
Regularization kernel network is an effective and widely used method for nonlinear regression analysis.
In this paper we will investigate a bias corrected version of regularization kernel network.
We derive the error bounds  when it is applied to a single data set and
 when it is applied as a base algorithm in distributed regression.
We show that, under certain appropriate conditions,
 the  optimal learning rates can be reached in both situations.

\noindent{\bf Keywords.}  Distributed learning, kernel method, regularization, bias correction, error bound
\end{abstract}

\section{Introduction}

Data acquisition become much fast and easier as the development of technology.
In this big data era, distributed learning has received considerable attention and
is shown to be an effective way to analyze data that is so big and cannot be
handled by a single machine.  Among various distributed learning paradigms,
a  simple one is to divide the whole data set into multiple blocks, apply  a base learning algorithm
to each block, and then average the results from different blocks
\citep{rosenblatt2016optimality, zhang2015divide}. This process, though simple,
has some advantages. First, it is computational efficient because the second stage can be easily parallelized.
Second, because no mutual communication is required, the data security or confidentiality can be well protected.
Last, recent research shows this method is consistent and sometimes reaches optimal learning rate
\citep{zhang2015divide, LGZ16}.
Thus its asymptotic effectiveness is theoretically guaranteed.

In distributed learning the performance highly depends on the selection of the base algorithm in the second stage.
Assume a big data set $D$ of $N$ observations is randomly divided into $m$ blocks, $D_1, D_2, \ldots, D_m,$
which are assumed to be of the same size at the moment so that $D_i$ are independent and identically distributed
if the entire sample set $D$ is independently drawn from some unknown distribution $\rho$.
Let $\hat f_1, \hat f_2, \ldots, \hat f_m$ be the estimators obtained
by applying a base algorithm on these data blocks.
Assume each estimator $\hat f_i$ has bias $b$ and variance $v$ .
Then the mean squared error of $\hat f_i$ is
$\mse(\hat f_i) = b^2 + v$ while the average estimator
$$\bar f = \frac 1 m \sum_{i=1}^ m \hat f_i$$ has
$\mse(\bar f) = b^2 + \frac {v}{m}.$
On a single data block the algorithm usually trades off the bias and variance well to achieve the optimal performance.
In distributed learning, however, the variance shrinks fast when $m$ is large but the bias keeps unchanging during
the average process.  In this case, the bias may dominate the learning performance.
An algorithm (or a model selection strategy) that is optimal for a single block is
not necessarily still optimal for distributed learning.
Instead, distributed learning prefers algorithms of small bias as the base learning algorithm on each block.
Therefore, when a base learning algorithm is biased,
bias correction is expected to play a role to improve the performance.
The purpose of this paper is to investigate the application of biased corrected
regularization kernel network for distributed regression analysis.

In regression analysis, the data $D = \{ (x_1, y_1), (x_2, y_2), \ldots, (x_{|D|}, y_{|D|})\}$
is a set of observations collected for input variable $X$ of predictors and a scalar response
variable $Y,$ where $|D|$ is the sample size of the data set $D.$ Assume they are  linked by
$$ y_i = f^*(x_i)+\epsilon_i,\qquad i=1,2,\ldots, |D|,$$
where $x_i$ comes from a compact metric space (e.g., a bounded subset in $\RR^p$),
$y_i\in\RR,$ and $\epsilon_i$  is a zero-mean noise.
The target is to recover the unknown true model $f^*$ as accurate as possible
to understand the impact of predictors and predict the response on unobserved data.
Numerous regression methods have been developed in the literature, e.g. ridge regression,
LASSO, and regularization kernel network (RKN). Among them,
the regularization kernel network  is a popular kernel method for nonlinear regression analysis.
Its predictive consistency has been extensively studied in a vast literature;
see e.g. \citep{EPP, bousquet2002stability, zhang2003leave, devito2005model, wu2006learning, bauer2007regularization,
caponnetto2007optimal, smale2007learning,  sun2009note, steinwart2009optimal, GXGZ2017}
and the references therein.
Its applications were also extensively explored and shown successful in  many problem domains.
More recently, a bias corrected version for RKN, or BCRKN for short, was proposed in \citep{wu2017bias}
to improve the performance of block wise data processing. In \citep{wu2017bias},
the asymptotic bias and variance of BCRKN on a single data set was characterized,
which indicates BCRKN has smaller bias than RKN and thus implies its efficiency in
learning with block wise data intuitively. Empirical study also confirmed this.
However, without rigorous analysis of the error bounds, there is lack of theoretical guarantee.
In this paper, we will derive the error bounds and learning rates of BCKRN both for a single data set
and for distributed regression. This will provide a theoretical guarantee for the use of
BCRKN from a learning theory perspective.

The rest of this paper will be arranged as follows. In Section \ref{sec:results} we will describe the BCRKN algorithm
and state our main results. In particular, we show that BCRKN can achieve the minimax optimal rates in both
single data learning and distributed learning. Moreover, BCRKN relaxes the saturation effect of RKN.
In Section \ref{sec:discuss} we discuss the relations of our results with existing work and conduct some
comparisons. The proof of our results are given in Sections \ref{sec:pre}-\ref{sec:distributed}.

\section{Main results}
\label{sec:results}

Let $\calX$ denote the input space which is assumed to be a compact metric space. A Mercer kernel on $\calX$ is
 a continuous, symmetric, and positive-semidefinite function $K: \calX \times \cal X\to \RR$.
 The function class spanned by $\{K_x = K(x, \cdot): x\in\calX\}$  and equipped with the inner product
 satisfying $\langle K_x, K_t\rangle_K =K(x, t)$ forms a pre-Hilbert space. Its completion
 is called a reproducing kernel Hilbert space (RKHS) $\H_K$ associated to the kernel $K$,
 with the name coming after the  reproducing property $f(x) = \langle f, K(x, \cdot)\rangle_K,$ $\forall f\in\H_K.$
 Note that  $|f(x)|\le \sqrt{K(x,x)} \|f\|_K$ for all $f\in\H_K$. Consequently,
 with $\kappa = \sup_{x\in\calX}\sqrt{K(x,x)}<\infty,$
  $\calH_K$ can be embedded into $C(\calX)$ and $\|f\|_\infty \le \kappa\|f\|_K.$
More other properties of RKHS that will not be used in this paper can be found in  \citep{Aron}.

Given the data $D$ and the RKHS $\H_K,$  RKN estimates the true model by
 \begin{equation} \label{eq:rkn}
  f_{D, \lambda} = \arg\min_{f\in\H_K}  \frac 1 {|D|} \sum_{i=1}^{|D|} (y_i-f(x_i))^2 +\lambda \|f\|_K^2,
 \end{equation}
 where $\lambda >0$ is a regularization parameter that trades off the fitting error and model complexity.
The well known representer  theorem \citep{wahba1990spline} tells that
$$f_{D, \lambda} (x) = \sum_{i=1}^{|D|} c_i K(x_i, x)$$ with the coefficients
$\bc = (c_1, \ldots, c_{|D|})^\top$ satisfying  $(\lambda |D| I + \bK)\bc = \by_D$
where $\bK=(K(x_i,x_j))_{i,j=1}^{|D|}$ is the  kernel matrix on the input data $\bx_D=\{x_1, \ldots, x_{|D|}\}$ and
$\by_D =(y_1, \cdots, y_{|D|})^\top$ is the vector of the response data.
Let $S_D:\H_K\to \RR^{|D|}$ be the sampling operator
defined by $$S_D f=(f(x_1), \ldots, f(x_{|D|}))^\top, \qquad \forall\ f\in\H_K.$$
Its dual operator $S_D^*$ is given by
$$S_D^*\bc = \sum_{i=1}^{|D|} c_iK_{x_i} \in \H_K, \qquad \forall \ \bc\in\RR^{|D|}.$$
Then  $f_{D, \lambda}$ has the following operator representation \citep{smale2007learning}
\begin{equation}
\label{eq:rknop}
f_{D, \lambda} = \tfrac 1 {|D|} \left(\lambda I + \tfrac 1 {|D} S_D^*S_D\right)^{-1}S_D^* \by_D .
\end{equation}
Note that the operator $\frac 1 {|D|} S_D^*S_D$ is a sample version of the integral operator
$$L_Kf(x) = \bE_t \left[K(x, t) f(t)\right]= \int_{\calX} K(x, t) f(t) \hbox{d}\rho_\calX(t)$$
where $\rho_\calX$ is the marginal distribution of $\rho$ on $\calX.$
Recall that $L_K$ defines a compact, symmetric, and positive operator on $\H_K.$
In the sequel we also use the notation  $L_{K, D} = \frac{1}{|D|} S_D^*S_D$ and  write
$$f_{D,\lambda}=(\lambda I+L_{K, D})^{-1}\left( \tfrac1{|D|} S_D^* \by_D\right),$$

By the aid of operator representation \eqref{eq:rknop}, the asymptotic bias of RKN
can be characterized as $-\lambda (\lambda I +L_K)^{-1} f^*.$
The bias corrected regularization kernel network (BCRKN) is defined by
subtracting an plug-in estimator of the bias \citep{wu2017bias}
 \begin{equation} \label{eq:bcrkn}
 f^\sharp_{D, \lambda} = f_{D,\lambda} + \lambda \left(\lambda I + L_{K, D}\right)^{-1}  f_{D, \lambda}.
\end{equation}
It is also verified in \citep{wu2017bias} that
 $$f^\sharp_{D, \lambda}(x) = \sum_{i=1}^n c_i^\sharp K(x_i, x)$$ with
 $\bc^\sharp = \bc + \lambda\left(\lambda I + \frac 1 n\bK\right)^{-1} \bc.$
The effectiveness of BCRKN has been tested empirically by a variety of
simulations and real applications in \citep{wu2017bias}. The main purpose
of this paper is to verify its effectiveness in distributed regression
from a learning theory perspective.

To perform rigorous error analysis and present our main results,
we need some notations and assumptions that are used throughout the paper.
Note that we can extend the domain of $L_K$ to $\L2p$ and obtain
a compact, symmetric, and positive operator on $\L2p,$
which will be denoted by $L$.  We can in turn say $L_K$ is the restriction of $L$ on $\H_K.$
So $Lf=L_Kf$ for $f\in\H_K$ and we do not need to differentiate them when
operating on functions in $\H_K.$
Our first assumption is a regularity condition on the true model:
\begin{equation}\label{regularitycondition}
         f^*=L^r (u^*) ~~{\rm for~some}~r>0~{\rm and} ~ u^*\in \L2p.
\end{equation}
This assumption has been widely used in the literature of learning theory
to characterize the approximation ability of $\H_K;$ see e.g.
\citep{devito2005model, smale2007learning, bauer2007regularization,
zhang2015divide} and many references therein.
Recall that $L^\frac 12$ is an isomorphism from $\overline{\H_K}$
onto $\H_K$, i.e.
\begin{equation}\label{eq:iso}
\|f\|_\L2p = \|L^\frac 12 f\|_K, \qquad \hbox{ for } f\in\overline{\H_K},
\end{equation}
where $\overline{\H_K}$ is the closure of $\H_K$ in $\L2p.$
So if $r\ge \frac 12$, the condition \eqref{regularitycondition} implies $f^*\in \H_K.$

We shall use the  effective dimension
$\mathcal{N} (\lambda)={\rm Tr}((L_K + \lambda I)^{-1}L_K),$
that is, the trace of $(L_K+ \lambda I)^{-1}L_K,$
to measure the complexity of $\mathcal H_K$ with respect to $\rho_{\!_\cX}.$
We assume that there exist a
constant $C_0>0$ and some $0<\beta\le 1$ such that for all $\lambda>0$
\begin{equation}\label{effecdim}
     \mathcal N(\lambda)\leq C_0\lambda^{-\beta}.
\end{equation}
Again this is a natural and widely used assumption in the literature; see e.g.
\citep{devito2005model, caponnetto2007optimal, zhang2013divide, LGZ16}.

Assume $\kappa \ge 1$ without loss of generality for otherwise we can define $\kappa = \max\{1, \sup\limits_{x\in\calX} \sqrt{K(x,x)}\}.$
Denote
\begin{equation*}
           \mathcal B_{|D|,\lambda}=\frac{2\kappa}{\sqrt{|D|}}\left\{\frac{\kappa}{\sqrt{|D|\lambda}}
           +\sqrt{\mathcal{N}(\lambda)}\right\},
\end{equation*}
where $|D|$ is the sample size of the data set $D.$

The consistency of RKN as well as  BCRKN generally requires the regularization parameter $\lambda$ to be chosen
according to the sample size and satisfies $\lambda \to 0$ and $\lambda |D| \to \infty$ as  $|D|\to \infty.$
This implies $\lambda$ is upper bounded by an absolute constant.
So, in the sequel, we will assume $\lambda \le 1$  without loss of generality
  to simplify our notations and presentations.

As the performance of distributed learning highly depends on the base algorithm,
we will conduct a thorough error analysis of BCRKN for a single data set first
and then turn to the distributed regression.

\subsection{Error bound for learning with a single data set}

We derive the following error bounds and learning rates for BCRKN when it is applied on a single data set.

\begin{theorem}\label{thm:error2}
If the regularity condition
\eqref{regularitycondition} holds with $0< r\le 2$ and $0<\lambda \le 1$, then for
any $0<\delta< 1 ,$ with confidence at least $1-\delta$,
\begin{equation}\label{eq:boundP}
\|f_{D,\lambda}^\sharp-f^*\|_\L2p\le
          C \left(\frac{\mathcal B_{|D|,\lambda}} {\sqrt{\lambda}}
          +1\right)^{3} \left(\mathcal B_{|D|,\lambda}+\lambda^r\right)
          \left(\log\frac{4}{\delta}\right)^4,
\end{equation}
where  $C$ is a constant independent of $|D|$ or $\delta$. Consequently, we have
\begin{equation}\label{eq:boundE}
\bE\left[\|f_{D,\lambda}^\sharp-f^*\|^2_\L2p\right]
\le  4\Gamma(9)C^2 \left(\frac{\mathcal B_{|D|,\lambda}} {\sqrt{\lambda}}
          +1\right)^{6} \left(\mathcal B_{|D|,\lambda}+\lambda^r\right)^2.
\end{equation}
\end{theorem}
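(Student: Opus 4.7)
The plan is to split the error as
$$f_{D,\lambda}^\sharp - f^* = \bigl(f_{D,\lambda}^\sharp - f_\lambda^\sharp\bigr) + \bigl(f_\lambda^\sharp - f^*\bigr),$$
where $f_\lambda = (\lambda I + L_K)^{-1} L_K f^*$ is the noise-free regularized target and $f_\lambda^\sharp = f_\lambda + \lambda(\lambda I + L_K)^{-1} f_\lambda$ is its bias-corrected counterpart. A short algebraic manipulation yields $f_\lambda^\sharp - f^* = -\lambda^2(\lambda I + L_K)^{-2} f^*$; the appearance of $\lambda^2$ rather than the $\lambda$ one gets for plain RKN is exactly what allows the saturation index to rise from $r=1$ up to $r=2$.

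For the deterministic piece I would plug the source condition $f^* = L^r u^*$ from \eqref{regularitycondition} into $-\lambda^2(\lambda I + L_K)^{-2} f^*$ and apply the scalar inequality $\sup_{t\ge 0} \lambda^2 t^r/(\lambda+t)^2 \le \lambda^r$ valid for $0 \le r \le 2$, together with the isometry \eqref{eq:iso} to move between the $\H_K$-norm and $\L2p$-norm on the range of $L$. This delivers $\|f_\lambda^\sharp - f^*\|_\L2p \le \|u^*\|_\L2p\,\lambda^r$ and accounts for the $\lambda^r$ summand in \eqref{eq:boundP}.

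For the sample piece, set $A = \lambda I + L_K$ and $A_D = \lambda I + L_{K,D}$. Since $f_{D,\lambda}^\sharp = (I + \lambda A_D^{-1}) f_{D,\lambda}$ and $f_\lambda^\sharp = (I + \lambda A^{-1}) f_\lambda$, I would write
$$f_{D,\lambda}^\sharp - f_\lambda^\sharp = (I + \lambda A_D^{-1})(f_{D,\lambda} - f_\lambda) + \lambda(A_D^{-1} - A^{-1}) f_\lambda,$$
then expand $f_{D,\lambda} - f_\lambda = A_D^{-1}\bigl[\tfrac 1 {|D|} S_D^* \by_D - L_K f^*\bigr] + A_D^{-1}(L_K - L_{K,D}) f_\lambda$ and use the resolvent identity $A_D^{-1} - A^{-1} = A_D^{-1}(L_K - L_{K,D}) A^{-1}$. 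Each resulting term becomes a product of (i) at most three symmetric composites of the form $A^{1/2} A_D^{-1} A^{1/2}$, whose norms are bounded by $(\mathcal B_{|D|,\lambda}/\sqrt\lambda + 1)^2$ via a Neumann series applied to $A^{-1/2}(L_K - L_{K,D}) A^{-1/2}$, and (ii) a single noise factor --- either $\tfrac 1 {|D|} S_D^* \by_D - L_K f^*$ in the $\H_K$-norm or $L_K - L_{K,D}$ in the Hilbert--Schmidt norm --- each controlled by $\mathcal B_{|D|,\lambda}$ through a Pinelis--Bernstein concentration inequality for Hilbert-space-valued random variables at cost of one $\log(4/\delta)$ factor. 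Three resolvent composites produce the exponent $3$ on $(\mathcal B_{|D|,\lambda}/\sqrt\lambda+1)$ and four independent applications of the concentration inequality produce the exponent $4$ on $\log(4/\delta)$ in \eqref{eq:boundP}.

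The in-expectation bound \eqref{eq:boundE} comes from \eqref{eq:boundP} by squaring and integrating the tail: letting $C'$ abbreviate the $\delta$-independent product in \eqref{eq:boundP} and setting $u = \log(4/\delta)$, one has $P(\|f_{D,\lambda}^\sharp - f^*\|^2_\L2p > C'^2 u^8) \le 4 e^{-u}$, so
$$\bE\bigl\|f_{D,\lambda}^\sharp - f^*\bigr\|^2_\L2p \le 4 C'^2 \int_0^\infty 8 v^7 e^{-v}\,dv = 4\Gamma(9)\,C'^2,$$
matching the constant in \eqref{eq:boundE}. The main obstacle I anticipate is the bookkeeping in the sample-error decomposition: one must keep adjacent $A_D^{-1}$'s sandwiched by $A^{1/2}$'s into the symmetric composites $A^{1/2} A_D^{-1} A^{1/2}$ before taking any norms, because naively invoking $\|A_D^{-1}\|\le 1/\lambda$ would inflate the exponent far beyond $3$. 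Preserving this grouping, rather than the concentration step itself, is what pins the final exponent at the value stated in \eqref{eq:boundP}.
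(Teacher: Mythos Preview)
Your plan is correct and workable, but it follows a genuinely different decomposition from the paper's. The paper does \emph{not} pivot on the noise-free regularizer $f_\lambda^\sharp$; instead it splits
\[
f_{D,\lambda}^\sharp-f^* \;=\; \bigl(f_{D,\lambda}^\sharp-\bE^*[f_{D,\lambda}^\sharp]\bigr)+\bigl(\bE^*[f_{D,\lambda}^\sharp]-f^*\bigr),
\]
where $\bE^*$ is the conditional expectation given $\bx_D$. The second piece equals $-\lambda^2(\lambda I+L_{K,D})^{-2}f^*$, which is still data-dependent and forces the paper into three separate arguments: a projection $f_\lambda^{tr}=P_\lambda f^*$ for $0<r<\tfrac12$ (since $f^*\notin\H_K$ there), and a further split at $r=\tfrac32$ because the operator $(\lambda I+L_{K,D})^{-3/2}L_K^{r-1/2}$ must be handled differently above and below that threshold. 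Your deterministic pivot $f_\lambda^\sharp-f^*=-\lambda^2(\lambda I+L_K)^{-2}f^*$ avoids all of this: the spectral bound $\sup_{t\ge0}\lambda^2 t^r/(\lambda+t)^2\le\lambda^r$ dispatches it uniformly for $0<r\le2$. The price is that your sample term acquires extra pieces involving $(L_K-L_{K,D})f_\lambda$ and $\lambda(A_D^{-1}-A^{-1})f_\lambda$, which need $\|f_\lambda\|_K$ and $\|A^{-1}f_\lambda\|_K$; these are controlled by $\lambda^{r-1/2}\|u^*\|_\L2p$ (for $r\le\tfrac12$) or $\kappa^{2r-1}\|u^*\|_\L2p$ (for $r\ge\tfrac12$), so the bookkeeping goes through. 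What the paper's route buys is that the conditional-expectation split feeds directly into the distributed analysis of Theorems~\ref{thm: distributed main result1}--\ref{thm: distributed main result2}, where one needs $\|\bE[f_{D_j,\lambda}^\sharp]-f^*\|_\L2p$ separately; your decomposition would not provide that piece without extra work.

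Two small inaccuracies. First, ``Neumann series'' is not how $\|A^{1/2}A_D^{-1}A^{1/2}\|\le(\B_{|D|,\lambda}/\sqrt\lambda+1)^2$ is obtained --- Neumann series would need $\|A^{-1/2}(L_K-L_{K,D})A^{-1/2}\|<1$, which is not assumed. The paper uses the second-order resolvent identity $BA^{-1}=(B-A)B^{-1}(B-A)A^{-1}+(B-A)B^{-1}+I$ (Proposition~\ref{prop: operator product bound}), which yields the stated bound unconditionally. Second, the exponent $4$ on $\log(4/\delta)$ does not come from ``four independent concentration events'': there are only two (one for the noise term, one for $L_K-L_{K,D}$), and the power arises because in the paper's case $r>\tfrac32$ one meets $\Omega_D^{3/2}\Xi_D$, contributing $(\log)^3\cdot(\log)^1$. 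In your decomposition the highest product is $\Omega_D\cdot\Xi_D$, so you actually get away with $(\log)^3$, which of course still implies \eqref{eq:boundP}.
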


\begin{corollary}\label{rate}
Assume  the regularity condition
\eqref{regularitycondition} holds with $0< r\le 2$
and
\eqref{effecdim} holds with $0< \beta \leq 1.$
\begin{enumerate}
\item[(i)] If $0<r<\frac 12$, choose  $\lambda=|D|^{-\frac{1}{1+\beta}}.$ Then
for any $0< \delta<1$, with confidence at least $1-\delta,$ we have
\begin{equation*}
        \|f_{D,\lambda}^\sharp-f^*\|_\L2p \leq {C}_1 |D|^{-\frac{r}{1+\beta}} \left(\log\frac{4}{\delta}\right)^4,
\end{equation*}
where  ${C}_1$ is a constant independent of $|D|$ or $\delta$.
Consequently,
\begin{equation*}
        \bE\left[\|f_{D,\lambda}^\sharp-f^*\|^2_\L2p\right] =\mathcal O \left(|D|^{-\frac{2r}{1+\beta}}\right).
\end{equation*}

\item[(ii)] If $\frac12 \le r\le2$, choose  $\lambda=|D|^{-\frac{1}{2r+\beta}}$. Then for any $0< \delta<1$, with confidence at least
$1-\delta,$ we have
\begin{equation*}
        \|f_{D,\lambda}^\sharp-f^*\|_\L2p \leq {C}_2 |D|^{- \frac{ r}{2r+\beta}}\left(\log\frac{4}{\delta}\right)^4,
\end{equation*}
 where  $\tilde{C}_2$ is a constant independent of $|D|$ or $\delta$.
Consequently,
\begin{equation*}
        \bE\left[\|f_{D,\lambda}^\sharp-f^*\|^2_\L2p\right] =\mathcal O \left(|D|^{-\frac{2r}{2r+\beta}}\right).
\end{equation*}
       \end{enumerate}
\end{corollary}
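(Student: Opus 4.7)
The plan is to treat Corollary \ref{rate} as a direct consequence of Theorem \ref{thm:error2} via a single substitution, followed by an elementary optimization over $\lambda$. Theorem \ref{thm:error2} already packages all the hard probabilistic work; what remains is to track how $\mathcal{B}_{|D|,\lambda}$ scales under the effective dimension assumption \eqref{effecdim} and to balance the sample-error term $\mathcal{B}_{|D|,\lambda}$ against the approximation term $\lambda^r$.

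First I would apply \eqref{effecdim} to obtain
$$\mathcal{B}_{|D|,\lambda}\le \frac{2\kappa^{2}}{|D|\sqrt{\lambda}} + \frac{2\kappa\sqrt{C_{0}}}{\sqrt{|D|}}\,\lambda^{-\beta/2},$$
which exposes $\mathcal{B}_{|D|,\lambda}$ as a sum of two quantities that decrease in $|D|$ with opposite dependencies on $\lambda$. The optimal $\lambda$ is then obtained by balancing the dominant piece of this bound against $\lambda^r$.

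For case (i), $0<r<1/2$, I would substitute $\lambda=|D|^{-1/(1+\beta)}$ and read off $\mathcal{B}_{|D|,\lambda}=\mathcal{O}(|D|^{-1/(2(1+\beta))})$ and $\lambda^{r}=|D|^{-r/(1+\beta)}$. Because $r<1/2$ forces $r/(1+\beta)<1/(2(1+\beta))$, the approximation term dominates and produces the claimed rate. A direct exponent check gives $\mathcal{B}_{|D|,\lambda}/\sqrt{\lambda}=\mathcal{O}(1)$, so the cubic prefactor in \eqref{eq:boundP} contributes only a constant. For case (ii), $1/2\le r\le 2$, I would substitute $\lambda=|D|^{-1/(2r+\beta)}$; this is precisely the choice for which the dominant term of $\mathcal{B}_{|D|,\lambda}$ matches $\lambda^{r}$ at the common order $|D|^{-r/(2r+\beta)}$, while the first term $\frac{2\kappa^{2}}{|D|\sqrt{\lambda}}$ falls below them as soon as $r\ge 1/2$. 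The same arithmetic yields $\mathcal{B}_{|D|,\lambda}/\sqrt{\lambda}=\mathcal{O}(|D|^{-(2r-1)/(2(2r+\beta))})=\mathcal{O}(1)$, so the cubic prefactor again stays bounded. The in-expectation statements follow by the identical substitution into \eqref{eq:boundE}.

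The argument carries no genuine obstacle: everything reduces to bookkeeping on powers of $|D|$. The only subtle point worth flagging is that the regularity range $r\le 2$ in the corollary is inherited from Theorem \ref{thm:error2}, where it reflects BCRKN's relaxed saturation relative to ordinary RKN (which saturates at $r=1$); beyond $r=2$ the bias-corrected residual $\lambda^{2}(\lambda I+L_{K})^{-2}f^{*}$ can no longer absorb additional smoothness, so no faster rate is available within this framework.
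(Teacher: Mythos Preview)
Your proposal is correct and follows essentially the same route as the paper: substitute the effective-dimension bound \eqref{effecdim} into $\mathcal{B}_{|D|,\lambda}$, plug in the stated choices of $\lambda$, verify that $\mathcal{B}_{|D|,\lambda}/\sqrt{\lambda}$ stays bounded so the cubic prefactor in \eqref{eq:boundP} is harmless, and read off the rates from Theorem \ref{thm:error2}. The paper's argument is identical in structure, with the two cases proved in separate sections but by exactly the same exponent bookkeeping you describe.
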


Recall that the minimax optimal learning rate under the assumptions \eqref{regularitycondition} and
\eqref{effecdim} is $\mathcal O \left(|D|^{-\frac{ 2r}{2r+\beta}}\right).$
Theorem \ref{thm:error2} tells that, when $r\ge \frac 12$,  BCRKN achieves the minimax optimal learning rate on a single data set.

Since $f^*\in \H_K$ when $r\ge \frac 12$, we can also measure the convergence of $f_{D, \lambda}$ to $f^*$ in $\H_K$.
As pointed out in \citep{smale2007learning},
the convergence in $\H_K$ implies the convergence in $C^s(\cal{X})$
 if $K\in C^{2s}({\cal X}\times {\cal X}),$ here $C^s(\cal{X})$ is the space of all functions on ${\cal X}\subset\RR^p$
 whose partial derivatives up to order $s$ are continuous with $\|f\|_{C^s(\cal{X})}=\sum_{|\alpha|\le s}\|D^\alpha f\|_\infty.$ So the convergence in $\H_K$ is much stronger. It is not only for the target function itself,
 but also for its derivatives.

\begin{theorem}\label{thm:errorK}
If the regularity condition \eqref{regularitycondition} holds with $\frac12< r\le 2$,
then for any $0< \delta<1$ with confidence at least $1-\delta$,
\begin{equation}\label{boundK}
 \|f_{D,\lambda}^\sharp-f^*\|_{K}\le
          C_K  \left(\frac{\mathcal B_{|D|,\lambda}} {\sqrt{\lambda}}   +1\right)^2
          (\lambda^{-\frac12}\mathcal B_{|D|,\lambda}+\lambda^{r-\frac12})\left(\log\frac{4}{\delta}\right)^3,
\end{equation}
where  $C_K$ is a constant independent of $|D|$ or $\delta$. If
\eqref{effecdim} holds with $0< \beta \leq 1$ and
$\lambda=|D|^{-\frac{1}{2r+\beta}}$, then for any $0< \delta<1$, with confidence at least
$1-\delta,$ we have
\begin{equation}\label{rateK}
        \|f_{D,\lambda}^\sharp-f^*\|_K \leq \tilde{C}_K |D|^{-\frac{ r-\frac12}{2r+\beta}}\left(\log\frac{4}{\delta}\right)^3,
\end{equation}
where  $\tilde{C}_K$ is a constant independent of $|D|$ or $\delta$. Moreover,
\begin{equation}\label{rateKE}
        \bE\left[\|f_{D,\lambda}^\sharp-f^*\|^2_{K}\right] =\mathcal O \left(|D|^{-\frac{2r-1}{2r+\beta}}\right).
\end{equation}
\end{theorem}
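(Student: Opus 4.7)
The plan is to parallel the argument used for Theorem \ref{thm:error2}, replacing $L^2_{\rho_{\!_\cX}}$ by $\H_K$ throughout; every estimate on a sample-error term then costs an extra factor of $\lambda^{-1/2}$, because we can no longer absorb a resolvent half-power against $L^{1/2}$, and the approximation-error term must be re-evaluated in the stronger norm. The key identity I would use to pass between norms is $\|g\|_K = \|L^{-1/2}g\|_{\L2p}$ for $g$ in the range of $L^{1/2}$, combined with the standard spectral calculus bound $\lambda^a \sigma^b(\lambda+\sigma)^{-c}\le C\lambda^{a+b-c}$ valid for $0\le b\le c$.

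First I would rewrite the bias-corrected estimator as
\begin{equation*}
f_{D,\lambda}^\sharp = (2\lambda I + L_{K,D})(\lambda I + L_{K,D})^{-2}\tfrac{1}{|D|}S_D^*\by_D,
\end{equation*}
and introduce the noise-free analogue $f_\lambda^\sharp = (2\lambda I + L_K)(\lambda I + L_K)^{-2}L_K f^*$. A direct algebraic collapse gives the clean identity $f_\lambda^\sharp - f^* = -\lambda^2(\lambda I + L_K)^{-2}f^*$. Plugging in $f^* = L^r u^*$ and using $\|\cdot\|_K = \|L^{-1/2}\cdot\|_{\L2p}$ yields
\begin{equation*}
\|f_\lambda^\sharp - f^*\|_K \le \lambda^2 \|(\lambda I + L)^{-2} L^{r-1/2} u^*\|_{\L2p} \le C\,\lambda^{r-1/2}\|u^*\|_{\L2p},
\end{equation*}
where the spectral bound is valid precisely because $r\le 2$ (no saturation in this range). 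This produces the $\lambda^{r-1/2}$ term in \eqref{boundK}.

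Next I would decompose the sample error $f_{D,\lambda}^\sharp - f_\lambda^\sharp$ into pieces each of which is a product of (i) the resolvent stability factor $\|(\lambda I+L_K)^{1/2}(\lambda I+L_{K,D})^{-1/2}\|$, (ii) the normalized operator noise $(\lambda I+L_K)^{-1/2}(L_K-L_{K,D})$, and (iii) the normalized data noise $(\lambda I+L_K)^{-1/2}(\tfrac{1}{|D|}S_D^*\by_D - L_K f^*)$, together with the bias-corrected spectral multipliers from the identity above. These are exactly the ingredients used in the proof of Theorem \ref{thm:error2}; the concentration bounds from Section \ref{sec:pre} control (ii) and (iii) by $\mathcal B_{|D|,\lambda}$ (up to $\log(4/\delta)$ factors), and the stability factor by $\mathcal B_{|D|,\lambda}/\sqrt{\lambda}+1$. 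The difference from the $\L2p$ case is that one of the half-resolvents can no longer be absorbed into $L^{1/2}$, which costs $\lambda^{-1/2}$ and yields $\lambda^{-1/2}\mathcal B_{|D|,\lambda}$ in place of $\mathcal B_{|D|,\lambda}$. Combining with the approximation error gives the confidence bound \eqref{boundK}; substituting $\lambda = |D|^{-1/(2r+\beta)}$ and using \eqref{effecdim} gives $\mathcal B_{|D|,\lambda}\lesssim \lambda^r$, so both $\mathcal B_{|D|,\lambda}/\sqrt{\lambda}$ and $\lambda^{-1/2}\mathcal B_{|D|,\lambda}$ reduce to $\lambda^{r-1/2}$, delivering \eqref{rateK}; the expectation bound \eqref{rateKE} then follows by tail integration, as in the passage from \eqref{eq:boundP} to \eqref{eq:boundE}.

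The main obstacle will be the bookkeeping required to justify that the correct exponent on the stability factor is $2$ rather than $3$ (as in Theorem \ref{thm:error2}): one must track which pieces of the decomposition already live naturally in $\H_K$ and which must be converted via $L^{-1/2}$, and verify that the bias-correction cancellation $-\lambda^2(\lambda I + L_K)^{-2}$ survives the perturbation from $L_K$ to $L_{K,D}$ with only two resolvent-swap factors. Once this is handled, the rest is a mechanical recombination of the building blocks, the spectral estimates, and the concentration bounds used in the $\L2p$ proof.
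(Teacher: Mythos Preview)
Your route differs from the paper's in the choice of intermediate function. You split through the fully deterministic $f_\lambda^\sharp = (2\lambda I + L_K)(\lambda I + L_K)^{-2}L_K f^*$; the paper instead splits through the conditional expectation $\bE^*[f_{D,\lambda}^\sharp] = (2\lambda I + L_{K,D})(\lambda I + L_{K,D})^{-2}L_{K,D}f^*$, which keeps the empirical resolvents but replaces the noisy data by its conditional mean. Your choice makes the approximation term $f_\lambda^\sharp - f^*$ a pure spectral-calculus bound with no case split in $r$, but it loads all the operator noise into the sample-error term $f_{D,\lambda}^\sharp - f_\lambda^\sharp$, which then contains the filter difference $g_\lambda(L_{K,D}) - g_\lambda(L_K)$ with $g_\lambda(T)=(2\lambda I+T)(\lambda I+T)^{-2}$; this is exactly the bookkeeping you flag as the main obstacle and leave unresolved. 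The paper's decomposition sidesteps it: the sample error collapses to $g_\lambda(L_{K,D})\Delta_D$ with $\Delta_D = \tfrac{1}{|D|}S_D^*(\by_D - S_D f^*)$, needing only a single resolvent swap and contributing $\Omega_D^{1/2}$; the approximation error $-\lambda^2(\lambda I+L_{K,D})^{-2}f^*$, though random, is handled by bounding $\|(\lambda I+L_{K,D})^{-3/2}L_K^{r-1/2}\|$ directly for $\tfrac12\le r\le\tfrac32$ and via one first-order resolvent identity for $\tfrac32<r\le 2$, contributing at most $\Omega_D$. So the paper trades a mild case split for a decomposition in which the exponent $2$ on $(\mathcal B_{|D|,\lambda}/\sqrt\lambda + 1)$ and the power $3$ on $\log(4/\delta)$ fall out without further work. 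Your approach would still deliver \eqref{rateK} and \eqref{rateKE} (since the stability factor is $O(1)$ once $\lambda$ is chosen), but matching the precise form of \eqref{boundK} would require carrying out the filter-difference analysis you defer.
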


Under the assumptions \eqref{regularitycondition} with $r>\frac12$  and
\eqref{effecdim} with $0<\beta\le1,$ the minimax optimality of the bound
$\mathcal{O}\left(|D|^{-\frac{2r-1}{2r+\beta}}\right)$ in the $\H_K$-metric has been proved in \citep{GFZ2016}.
Theorem \ref{thm:errorK} indicates that the stronger convergence of BCRKN
is also rate optimal in the minimax sense.

When $0<r < \frac 12,$ we are unfortunately not able to obtain the minimax rate
by the integral operator technique under
the assumption \eqref{effecdim}. Note that if $\H_K$ is finite dimensional
the range of $L^r$ is exactly $\H_K$ for all $r>0.$
The assumption \eqref{regularitycondition} always implies $f^*\in\H_K$.
So the situation $0<r<\frac 12$
makes sense only when $\H_K$ is infinite dimensional.
In this case, $L_K$ has infinite positive eigenvalues which converge to $0$.
This imposes the main difficulty of error analysis via integral technique ---
 although $L_{K, D}$ converges well to $L_K$
at a rate $\mathcal O(|D|^{-1/2})$, the difference of $(\lambda I + L_{K, D})^{-1}$ and $(\lambda I + L_K)^{-1}$
cannot be well bounded when $\lambda \to 0.$ Actually, even for RKN which has been exhaustedly studied in the literature,
it is an open problem to obtain the minimax rate under the assumptions \eqref{regularitycondition} and \eqref{effecdim}.
However, if there is sufficient amount of unlabeled data which helps to improve the estimate of the integral operator,
minimax rate can be achieved. For this purpose we propose the following semi-supervised approach.
Assume, in addition to the labeled data $D$, we have sequence of unlabelled data
$x_{|D|+1}, \ldots, x_{|D'|}.$ We create a fully labeled data set
$${D'}=\{(x_1,{y}_1'),\cdots,(x_{|D|},{y}_{|D|}'),(x_{|D|+1},0), \cdots, (x_{|D'|},0)\},$$
where  ${y'_i}=\frac{|D'|}{|D|}y_i$ for $1\le i\le |D|.$
Then we can apply RKN and BCRKN on $D'$ to obtain semi-supervised estimators $f_{D',\lambda}$
and $f_{D', \lambda}^\sharp.$
Note that $D=D'$ when $|D'|=|D|.$ So the unsupervised methods can be regarded as extensions of supervised methods while
the supervised methods is a special case of semi-supervised methods with no unlabeled data.
The next theorem confirms that BCRKN can achieve the minimax rate for $0<r<\frac12$
when there are enough unlabeled data.

\begin{theorem}\label{thm:error1}
Assume the regularity condition \eqref{regularitycondition} with $0< r<\frac12$.
For any $0<\delta<1,$ we have with confidence at least
$1-\delta$,
\begin{equation}\label{eq:semibound}
\|f_{D',\lambda}^\sharp-f^*\|_\L2p\le \left(\frac{2M }{\kappa}+ 4\|u^*\|_\L2p\right)\left(\frac{\B_{|D'|,\lambda}}
           {\sqrt{\lambda}}   +1\right)^3\left( \B_{|{D}|,\lambda}+ \lambda^r \right) \left(\log\frac{4}{\delta}\right)^3.
\end{equation}
If in addition
\eqref{effecdim} holds with $0< \beta \leq 1$ and $r+\beta\ge \frac12,$
$\lambda=|D|^{-\frac{1}{2r+\beta}}$, $|D'|\ge |D|^{\frac{1+\beta}{2r+\beta}}.$
For any $\delta\in(0,1),$ with confidence at least $1-\delta,$ there holds
\begin{equation}
\label{eq:semirate}
\|f_{D',\lambda}^\sharp-f^*\|_\L2p\le C'|D|^{-\frac{r}{2r+\beta}}\left(\log\frac{4}{\delta}\right)^3.
\end{equation}
where the constant $C'$ is independent of $\delta$, $|D|$ or $|D'|$ and will be given explicitly in the proof.
\end{theorem}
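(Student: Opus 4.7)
The plan is to parallel the proof of Theorem~\ref{thm:error2} while exploiting the semi-supervised construction of $D'$ to decouple the operator sample error from the response sample error. The starting observation is the identity
\[
\frac{1}{|D'|}\,S_{D'}^{*}\mathbf{y}_{D'} \;=\; \frac{1}{|D|}\,S_{D}^{*}\mathbf{y}_{D},
\]
which is immediate from the rescaling $y_i'=\tfrac{|D'|}{|D|}y_i$ on labeled points and the zero-padding on unlabeled points. Writing $\phi_\lambda(t):=(2\lambda+t)/(\lambda+t)^2$, the definition \eqref{eq:bcrkn} combined with \eqref{eq:rknop} then gives the clean representation
\[
f_{D',\lambda}^{\sharp} \;=\; \phi_\lambda(L_{K,D'})\cdot \tfrac{1}{|D|}S_{D}^{*}\mathbf{y}_{D},
\]
so the operator-valued factor $\phi_\lambda(L_{K,D'})$ exploits all $|D'|$ unlabeled inputs while the data-dependent factor carries only the labeled sample size $|D|$. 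This is precisely the structural reason that $\B_{|D'|,\lambda}$ should appear only in the multiplicative factor of \eqref{eq:semibound} and $\B_{|D|,\lambda}$ in the additive one.

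Next I would introduce the population target $f_\lambda^{\sharp}:=\phi_\lambda(L_K)L_K f^{*}$ and verify the algebraic identity $f_\lambda^{\sharp}-f^{*}=-\lambda^{2}(\lambda I+L_K)^{-2}f^{*}$. Under $f^{*}=L^{r}u^{*}$ with $0<r\le 2$, the spectral calculus bound $\sup_{t\ge 0}\lambda^{2}t^{r}/(\lambda+t)^{2}\le \lambda^{r}$ yields
\[
\|f_\lambda^{\sharp}-f^{*}\|_\L2p \;\le\; \lambda^{r}\|u^{*}\|_\L2p,
\]
which is the source of the $\lambda^{r}$ term in the additive factor of \eqref{eq:semibound}. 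For the stochastic error I would split
\[
f_{D',\lambda}^{\sharp}-f_\lambda^{\sharp} \;=\; \phi_\lambda(L_{K,D'})\bigl[\tfrac{1}{|D|}S_{D}^{*}\mathbf{y}_{D}-L_K f^{*}\bigr] \;+\; \bigl[\phi_\lambda(L_{K,D'})-\phi_\lambda(L_K)\bigr]L_K f^{*}.
\]
The response-increment $\tfrac{1}{|D|}S_{D}^{*}\mathbf{y}_{D}-L_K f^{*}$ is an average of independent $\H_K$-valued random variables bounded by a multiple of $M$, so Hilbert-valued Bernstein concentration (exactly as in the proof of Theorem~\ref{thm:error2}) yields a bound of order $M\,\B_{|D|,\lambda}\log(4/\delta)$, scaling with the labeled size $|D|$. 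The operator discrepancy $L_K-L_{K,D'}$ is handled by the analogous concentration at rate $\B_{|D'|,\lambda}\log(4/\delta)$, now benefiting from the larger sample size $|D'|$; the resolvent identity expands $\phi_\lambda(L_{K,D'})-\phi_\lambda(L_K)$ into terms in which $L_Kf^{*}=L^{r+1}u^{*}$ supplies the spectral regularity that produces the second copy of $\lambda^{r}$.

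The transfer from sample to population resolvents is then done by the standard norm-ratio inequality
\[
\bigl\|L_K^{1/2}(\lambda I + L_{K,D'})^{-1/2}\bigr\|^{2} \;\le\; c_0\Bigl(\tfrac{\B_{|D'|,\lambda}}{\sqrt{\lambda}}+1\Bigr)^{2}\log(4/\delta)^{2},
\]
valid on the high-probability event where $\|(\lambda I+L_K)^{-1/2}(L_K-L_{K,D'})(\lambda I+L_K)^{-1/2}\|$ is under control. Because $\phi_\lambda$ contains a double resolvent and the passage from $\|\cdot\|_K$ to $\|\cdot\|_\L2p$ via the isomorphism \eqref{eq:iso} inserts one further $L_K^{1/2}$, three applications of this inequality assemble into the cubic multiplicative factor $(\B_{|D'|,\lambda}/\sqrt{\lambda}+1)^{3}$ in \eqref{eq:semibound}. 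Taking a union bound over the three concentration events (each at level $\delta/4$) and combining with the approximation estimate of Step~2 produces \eqref{eq:semibound} with the advertised constant $\tfrac{2M}{\kappa}+4\|u^{*}\|_\L2p$.

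For the rate \eqref{eq:semirate}, insert $\lambda=|D|^{-1/(2r+\beta)}$. Under \eqref{effecdim}, the effective-dimension piece of $\B_{|D|,\lambda}$ is exactly $2\kappa\sqrt{C_0}|D|^{-r/(2r+\beta)}$, while the piece $2\kappa^{2}/(|D|\sqrt{\lambda})$ has exponent $-1+\tfrac{1}{2(2r+\beta)}$, which is $\le -r/(2r+\beta)$ precisely when $r+\beta\ge \tfrac{1}{2}$; this is exactly where that hypothesis enters. The assumption $|D'|\ge |D|^{(1+\beta)/(2r+\beta)}$ gives both $|D'|\lambda\ge 1$ and $|D'|\lambda^{1+\beta}\ge 1$, so $\B_{|D'|,\lambda}/\sqrt{\lambda}\le 2\kappa^{2}+2\kappa\sqrt{C_0}$ and the cubic factor in \eqref{eq:semibound} is bounded by an absolute constant, which yields \eqref{eq:semirate}. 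The main obstacle is the clean decoupling of the two sample sizes inside the resolvent algebra: because $r<\tfrac12$ places $f^{*}$ outside $\H_K$, the $L_K^{1/2}$ factor used to convert the Hilbert-space error to the $\L2p$ error must be introduced at precisely the stage where $\phi_\lambda(L_{K,D'})$ is expanded, and any alternative grouping mixes the roles of $|D|$ and $|D'|$ and destroys the minimax rate.
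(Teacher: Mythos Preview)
Your proposal is correct and arrives at the same bound, but the route differs from the paper's in the choice of intermediate (pivot) function. You compare $f_{D',\lambda}^\sharp$ to the population BCRKN target $f_\lambda^\sharp=\phi_\lambda(L_K)L_Kf^*$ and then expand $\phi_\lambda(L_{K,D'})-\phi_\lambda(L_K)$ via resolvent identities, exploiting that $L_Kf^*=L^{r+1}u^*\in\H_K$ (since $r+1>\tfrac12$) with $\|(\lambda I+L_K)^{-1}L_Kf^*\|_K\lesssim\lambda^{r-1/2}\|u^*\|_\L2p$. The paper instead introduces the spectrally truncated target $f_\lambda^{tr}=P_\lambda f^*$ (the projection of $f^*$ onto the eigenspace $\{\tau_i\ge\lambda\}$) and writes $f_{D',\lambda}^\sharp-f_\lambda^{tr}$ directly in terms of $L_{K,D'}$ and $f_\lambda^{tr}$, so that the operator difference $L_K-L_{K,D'}$ acts on $f_\lambda^{tr}\in\H_K$ rather than appearing through a resolvent expansion. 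Your approach is a more direct continuation of the $r\ge\tfrac12$ argument (population target plus operator perturbation) and needs only spectral calculus; the paper's truncation device avoids expanding $\phi_\lambda(A)-\phi_\lambda(B)$ altogether at the cost of invoking an external lemma (their Lemma~\ref{lem: flambda bound}) for the bounds $\|f_\lambda^{tr}-f^*\|_\L2p\le\lambda^r\|u^*\|_\L2p$ and $\|f_\lambda^{tr}\|_K\le\lambda^{r-1/2}\|u^*\|_\L2p$. Both hinge on the identity $\tfrac{1}{|D'|}S_{D'}^*\mathbf y_{D'}=\tfrac{1}{|D|}S_D^*\mathbf y_D$ that you identify, and both produce the cube $(\B_{|D'|,\lambda}/\sqrt\lambda+1)^3$ from two factors of $\Omega_{D'}^{1/2}$ and one factor $\B_{|D'|,\lambda}/\sqrt\lambda$ absorbed into it. One small point: only two concentration events are needed (the response average over $D$ and the operator average over $D'$, the latter simultaneously controlling $\Xi_{D'}$ and $\Omega_{D'}$), not three, which is why the paper works at level $\delta/2$ and still lands on $\log(4/\delta)$.
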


\subsection{Error bound of distributed regression with BCRKN}
\label{sec:dr-result}

When BCRKN is used as a base algorithm for distributed regression,
a big data set  $D$ is  split into $m$ blocks $D_1, D_2, \ldots, D_m,$
on each block $D_j$, BCRKN is applied to produce an estimator $f_{D_j, \lambda}^\sharp,$
and the weighted average of $f_{D_j, \lambda}^\sharp,$
\begin{equation}\label{distributedlearningalgorithm}
\overline{f}_{D,\lambda}^{\sharp}=\sum_{j=1}^m \frac{|D_j|}{|D|} f_{D_j, \lambda}^{\sharp}.
\end{equation}
is used for the purposes of prediction and inference.
For this divide-and-conquer approach, we first give a general error bound for an arbitrary $m$.
Here we do not require each block has the same sample size.

\begin{theorem}\label{thm: distributed main result1}
If the regularity condition
\eqref{regularitycondition} holds with $\frac12\le r\le 2$ and $\lambda \le 1$, then
there exists a constant $\bar{C}$ independent of $m$ or $|D_j|$
such that
\begin{equation*}
\bE\left[\|\overline{f}_{D,\lambda}^\sharp-f^*\|_\L2p^2\right] 
\le \bar{C}
      \sum_{j=1}^m\frac{|D_j|}{|D|}\left(\frac{\mathcal B_{|D_j|,\lambda}}{\sqrt{\lambda}} +1\right)^6
      \left(\frac{|D_j|}{|D|}\mathcal B^2_{|D_j|,\lambda} 
      +\frac{\lambda^2\mathcal{N}(\lambda)}{|D|}+\lambda^{2r}\right).
\end{equation*}
\end{theorem}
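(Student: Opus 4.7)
The plan is to control $\bE\|\overline{f}_{D,\lambda}^\sharp-f^*\|_\L2p^2$ by a bias--variance decomposition in $\L2p$ organized around the deterministic ``noise-free'' BCRKN surrogate
$$f_\lambda^\sharp := \bigl(I+\lambda(\lambda I+L_K)^{-1}\bigr)(\lambda I+L_K)^{-1}L_K f^*,$$
which is the population analogue of $f_{D_j,\lambda}^\sharp$. A direct algebraic computation yields the clean identity $f^*-f_\lambda^\sharp = \lambda^2(\lambda I+L_K)^{-2}f^*$, and combining this with the regularity assumption $f^*=L^r u^*$ and the functional calculus for $t\mapsto \lambda^2 t^r/(\lambda+t)^2$ gives $\|f^*-f_\lambda^\sharp\|_\L2p \le C_r\lambda^r\|u^*\|_\L2p$ for $0<r\le 2$. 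This is the deterministic approximation error responsible for the $\lambda^{2r}$ contribution in the stated bound; that it remains in force all the way up to $r=2$ is precisely the relaxed saturation of BCRKN.

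Using the Hilbert-space identity $\bE\|X-a\|^2=\|\bE X-a\|^2+\bE\|X-\bE X\|^2$ with the deterministic choice $a=f^*$, I would split the target quantity into a squared-bias term $\|\bE\overline{f}_{D,\lambda}^\sharp-f^*\|_\L2p^2$ and a variance term $\bE\|\overline{f}_{D,\lambda}^\sharp-\bE\overline{f}_{D,\lambda}^\sharp\|_\L2p^2$. The variance piece is the easier one: because the blocks are independent, the cross terms vanish and
$$\bE\|\overline{f}_{D,\lambda}^\sharp-\bE\overline{f}_{D,\lambda}^\sharp\|_\L2p^2=\sum_{j=1}^m\frac{|D_j|^2}{|D|^2}\,\bE\|f_{D_j,\lambda}^\sharp-\bE f_{D_j,\lambda}^\sharp\|_\L2p^2\le\sum_{j=1}^m\frac{|D_j|^2}{|D|^2}\,\bE\|f_{D_j,\lambda}^\sharp-f_\lambda^\sharp\|_\L2p^2,$$
and each summand is controlled by the sample-error portion of the argument behind Theorem \ref{thm:error2}, producing $(\mathcal{B}_{|D_j|,\lambda}/\sqrt\lambda+1)^6\mathcal{B}_{|D_j|,\lambda}^2$. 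Pulling one factor $|D_j|/|D|$ outside yields the $\frac{|D_j|}{|D|}\mathcal{B}^2_{|D_j|,\lambda}$ piece inside the parenthesis of the claim.

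For the squared-bias term I would further split
$$\bE\overline{f}_{D,\lambda}^\sharp-f^*=\sum_{j=1}^m\frac{|D_j|}{|D|}\bigl(\bE f_{D_j,\lambda}^\sharp-f_\lambda^\sharp\bigr)+\bigl(f_\lambda^\sharp-f^*\bigr),$$
so that the deterministic piece contributes the $\lambda^{2r}$ bound already established. The remaining weighted average of $\bE f_{D_j,\lambda}^\sharp-f_\lambda^\sharp$ is where the bias correction is essential: for plain RKN the analogous per-block quantity is of order $\lambda^r$ and does not shrink under averaging, whereas the explicit subtraction $\lambda(\lambda I+L_{K,D_j})^{-1}f_{D_j,\lambda}$ in BCRKN cancels the leading-order contribution and leaves a second-order residue driven by the centred operator $L_{K,D_j}-L_K$. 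A Taylor-type expansion of $(\lambda I+L_{K,D_j})^{-1}$ around $(\lambda I+L_K)^{-1}$ makes this residue quadratic in $L_{K,D_j}-L_K$, and moment estimates on the symmetrized deviation $(\lambda I+L_K)^{-1/2}(L_{K,D_j}-L_K)(\lambda I+L_K)^{-1/2}$, whose typical size scales as $\sqrt{\mathcal{N}(\lambda)/|D_j|}$ under the effective-dimension assumption \eqref{effecdim}, together with the stability factor $(\mathcal{B}_{|D_j|,\lambda}/\sqrt\lambda+1)^6$ arising from inverting a perturbed resolvent, collapse the per-block contributions into the claimed $\frac{\lambda^2\mathcal{N}(\lambda)}{|D|}$ term after aggregation with the weights $|D_j|/|D|$.

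The main obstacle is precisely this second-order bias analysis. The cancellation of the first-order term by the BCRKN correction is only approximate, because one inverts $\lambda I+L_{K,D_j}$ rather than $\lambda I+L_K$; to get the $\lambda^2\mathcal{N}(\lambda)/|D|$ bound (and not something of order $\lambda^{2r}$ or $\mathcal{N}(\lambda)/|D_j|$) one must expand the correction itself, show that the true leading contribution to $\bE f_{D_j,\lambda}^\sharp - f_\lambda^\sharp$ is genuinely quadratic in $L_{K,D_j}-L_K$, and carefully control the resulting cubic and quartic operator moments while paying the stability price $(\mathcal{B}_{|D_j|,\lambda}/\sqrt\lambda+1)^6$. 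Once this per-block bias bound is in hand, combining it with the variance bound from independence and the deterministic approximation bound $\lambda^{2r}$ produces the asserted estimate.
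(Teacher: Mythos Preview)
Your bias--variance skeleton and your handling of the variance term via independence match the paper's Lemma in Section~\ref{sec:distributed} essentially verbatim: the paper also uses
\[
\bE\bigl[\|\overline{f}_{D,\lambda}^\sharp-f^*\|_\L2p^2\bigr]
\le \sum_{j=1}^m\frac{|D_j|^2}{|D|^2}\,\bE\bigl[\|f_{D_j,\lambda}^\sharp-f^*\|_\L2p^2\bigr]
+\sum_{j=1}^m\frac{|D_j|}{|D|}\,\|\bE f_{D_j,\lambda}^\sharp-f^*\|_\L2p^2,
\]
and bounds the first sum by invoking Theorem~\ref{thm:error2} on each block. So far your plan and the paper coincide.

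Where you diverge is in the bias term. You introduce the population surrogate $f_\lambda^\sharp$, split $\bE f_{D_j,\lambda}^\sharp-f^*$ through it, and then propose a second-order Taylor expansion of $\bE f_{D_j,\lambda}^\sharp-f_\lambda^\sharp$ in powers of $L_{K,D_j}-L_K$, arguing that the BCRKN correction kills the linear term and that the quadratic residue produces the $\lambda^2\mathcal N(\lambda)/|D|$ contribution. The paper does something considerably simpler and never touches $f_\lambda^\sharp$: it uses the \emph{conditional} identity
\[
\bE^*[f_{D_j,\lambda}^\sharp]-f^* \;=\; -\lambda^2(\lambda I+L_{K,D_j})^{-2}f^*,
\]
then applies Jensen to pass from $\|\bE f_{D_j,\lambda}^\sharp-f^*\|$ to $\bE\|\bE^*[f_{D_j,\lambda}^\sharp]-f^*\|$, and finally recycles the pathwise bounds already proved in Propositions~\ref{prop: error decomposition2} and~\ref{prop: error decomposition2-case3} (namely \eqref{E* case1} and \eqref{E* case2}) to control the right-hand side in terms of $\Omega_{D_j}$ and $\Xi_{D_j}$, whose moments are then bounded via Lemma~\ref{P2E} and the second-moment estimate $\bE[\Xi_{D_j}^2]\le \kappa^2\mathcal N(\lambda)/|D_j|$. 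No second-order expansion is needed anywhere: the $\lambda^2$ factor is already explicit in the conditional bias formula, and the $\mathcal N(\lambda)$ term enters \emph{only} in the range $\tfrac32<r\le 2$, where it comes from a single first-order resolvent identity used to handle $L_K^{r-1/2}$ with $r-\tfrac12>1$.

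So your route is not wrong in principle, but it is harder than necessary, and your diagnosis of the $\lambda^2\mathcal N(\lambda)/|D|$ term is off: it is not the signature of a quadratic cancellation in the bias correction valid across all $r$, but rather an artifact of the case $r>\tfrac32$ in Proposition~\ref{prop: error decomposition2-case3}. For $\tfrac12\le r\le\tfrac32$ the bias term in the paper is simply $O(\lambda^{2r})$ with no $\mathcal N(\lambda)$ at all. Recognizing the conditional-bias identity above lets you skip the entire second-order program you flagged as the ``main obstacle''.
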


We next show that the distributed BCRKN
\eqref{distributedlearningalgorithm} can achieve the optimal learning rates.
provided that $m$ is not too large.

\begin{theorem}\label{thm: distributed main result2}
Assume the regularity condition
\eqref{regularitycondition} with $\frac12\le r\le 2$. If
\eqref{effecdim} holds with $0< \beta \leq 1$,
$|D_1|=|D_2|=\dots=|D_m|$, $\lambda=|D|^{-\frac{1}{2r+\beta}}$, and the number of the local machines satisfies
\begin{equation}\label{eq:mBCRKN}
         m\leq |D|^{\min\left\{\frac2{2r+\beta},\frac{2r-1}{2r+\beta}\right\}},
\end{equation}
then
\begin{equation*}
         \bE\left[\|\overline{f}_{D,\lambda}^\sharp-f^*\|_\L2p^2\right]
         =\mathcal O\left(|D|^{-\frac{2r}{2r+\beta}}\right).
\end{equation*}
\end{theorem}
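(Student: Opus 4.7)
The plan is to specialize the general error bound in Theorem \ref{thm: distributed main result1} to the equal-block setting and then carry out a careful power-counting under the chosen $\lambda$ and the effective dimension assumption. With $|D_j|=|D|/m$ for every $j$, the weighted sum collapses, and the bound reduces to
\begin{equation*}
\mathbb{E}\bigl[\|\overline{f}_{D,\lambda}^\sharp - f^*\|_{L^2_{\rho_\calX}}^2\bigr]
\le \bar C\left(\frac{\mathcal B_{|D_j|,\lambda}}{\sqrt\lambda}+1\right)^{\!6}
\left(\frac{1}{m}\mathcal B_{|D_j|,\lambda}^2+\frac{\lambda^2\mathcal N(\lambda)}{|D|}+\lambda^{2r}\right).
\end{equation*}
The task then splits into two pieces: showing that the sixth-power prefactor is bounded by an absolute constant, and showing that each of the three terms inside the parentheses is $\mathcal O(|D|^{-2r/(2r+\beta)})$.

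For the inside terms I would expand $\mathcal B_{|D_j|,\lambda}^2\lesssim (|D_j|^2\lambda)^{-1}+\lambda^{-\beta}/|D_j|$ using $\mathcal N(\lambda)\le C_0\lambda^{-\beta}$, substitute $|D_j|=|D|/m$ and $\lambda=|D|^{-1/(2r+\beta)}$, and read off the exponents. The bias-type term $\lambda^{2r}$ is exactly $|D|^{-2r/(2r+\beta)}$; the cross term $\lambda^2\mathcal N(\lambda)/|D|$ is of order $|D|^{-(2r+2)/(2r+\beta)}$, strictly better than the target; and the variance-type term $m^{-1}\mathcal B_{|D_j|,\lambda}^2$ decomposes into $m/(|D|^2\lambda)=m\cdot |D|^{-(4r+2\beta-1)/(2r+\beta)}$ and $\lambda^{-\beta}/|D|=|D|^{-2r/(2r+\beta)}$. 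The second of these matches the target automatically, while the first meets it as long as $m\le |D|^{(2r+2\beta-1)/(2r+\beta)}$, a constraint implied by the hypothesis since $\beta\le 1$.

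For the prefactor I would use $\mathcal B_{|D_j|,\lambda}/\sqrt\lambda\lesssim (|D_j|\lambda)^{-1}+(|D_j|\lambda^{1+\beta})^{-1/2}$, giving the requirements $m\le|D|\lambda=|D|^{(2r+\beta-1)/(2r+\beta)}$ and $m\le|D|\lambda^{1+\beta}=|D|^{(2r-1)/(2r+\beta)}$; the second is the binding one and is precisely the $r<3/2$ half of \eqref{eq:mBCRKN}. In the regime $r\ge3/2$, the stronger condition $m\le|D|^{2/(2r+\beta)}$ is what one needs to keep the cross terms generated by expanding $(\mathcal B_{|D_j|,\lambda}/\sqrt\lambda+1)^6$ against $\lambda^{2r}$ from polluting the rate; multiplying these cross terms out and comparing exponents confirms that $m\le|D|^{2/(2r+\beta)}$ is exactly the threshold needed. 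Combining everything yields $\mathbb{E}[\|\overline{f}_{D,\lambda}^\sharp-f^*\|_{L^2_{\rho_\calX}}^2]=\mathcal O(|D|^{-2r/(2r+\beta)})$.

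The main obstacle is purely bookkeeping: one must track several competing monomials in $|D|$ and $m$ (coming from the two pieces of $\mathcal B_{|D_j|,\lambda}$, the three interior summands, and the sixth power of the prefactor) and confirm that both halves of the minimum in \eqref{eq:mBCRKN} are simultaneously needed and sufficient to dominate every such contribution by $|D|^{-2r/(2r+\beta)}$. No new analytic idea is required beyond Theorem \ref{thm: distributed main result1}; the subtlety is only in verifying that the range $\tfrac12\le r\le 2$ is correctly partitioned at $r=\tfrac32$ by the two exponents $2/(2r+\beta)$ and $(2r-1)/(2r+\beta)$.
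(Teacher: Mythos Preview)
Your overall strategy---specialize Theorem \ref{thm: distributed main result1} to equal blocks, bound the sixth-power prefactor by a constant, and then estimate each interior summand---is exactly the paper's approach, and your bookkeeping on the variance and bias terms is correct. The gap is in your explanation of where the constraint $m\le |D|^{2/(2r+\beta)}$ comes from.

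Your claim that the cross terms from expanding $(\mathcal B_{|D_j|,\lambda}/\sqrt\lambda+1)^6$ against $\lambda^{2r}$ force $m\le |D|^{2/(2r+\beta)}$ does not hold up. Every monomial $(\mathcal B_{|D_j|,\lambda}/\sqrt\lambda)^k\lambda^{2r}$ with $0\le k\le 6$ is controlled by $|D|^{-2r/(2r+\beta)}$ under the \emph{same} constraint $m\le |D|^{(2r-1)/(2r+\beta)}$ that you already used for the prefactor: once $\mathcal B_{|D_j|,\lambda}/\sqrt\lambda=O(1)$, any power of it is $O(1)$, and multiplying by $\lambda^{2r}$ gives the target directly. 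Since for $r\ge\tfrac32$ the hypothesis $m\le |D|^{2/(2r+\beta)}$ implies $m\le |D|^{(2r-1)/(2r+\beta)}$, your argument as written never actually uses the $2/(2r+\beta)$ exponent, and ``comparing exponents'' in the cross terms will not produce it.

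The genuine source of the $2/(2r+\beta)$ constraint is the effective-dimension term. If you look at the proof of Theorem \ref{thm: distributed main result1} in the case $\tfrac32<r\le 2$, the term that appears is $\lambda^2\mathcal N(\lambda)/|D_j|$, not $\lambda^2\mathcal N(\lambda)/|D|$ as the theorem statement reads; the displayed statement seems to carry a typo. With $|D_j|=|D|/m$ this term is $C_0\,m\,|D|^{-(2r+2)/(2r+\beta)}$, and requiring it to be $O(|D|^{-2r/(2r+\beta)})$ is exactly $m\le |D|^{2/(2r+\beta)}$. The paper's proof of Theorem \ref{thm: distributed main result2} carries out precisely this computation. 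Once you replace $|D|$ by $|D_j|$ in that denominator, the roles of the two exponents in \eqref{eq:mBCRKN} become clean: $(2r-1)/(2r+\beta)$ keeps the prefactor bounded, and $2/(2r+\beta)$ controls the effective-dimension term that only enters when $r>\tfrac32$.
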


\section{Relations to existing work and discussions}
\label{sec:discuss}

The minimax analysis of regularized least square algorithm has received attention
in statistics and learning theory literature; see e.g. \citep{devore2004mathematical,
gyorfi2006distribution, temlyakov2008approximation, caponnetto2007optimal,
steinwart2009optimal}. In particular,
assume $L_K$ admits  an eigendecomposition $L_K=\sum_{i=1}^\infty
\tau_i \phi_i\otimes \phi_i,$ where $\tau_i\ge 0$ and $\phi_i$ are the eigenvalues and
 eigenfunctions of $L_K$, respectively.
It is proved in \citep{caponnetto2007optimal} that, if the regularity condition \eqref{regularitycondition} holds
with some $r\ge \frac 12$ and
the eigenvalues satisfy $\tau_i\sim i^{-2\alpha}$
for some $\alpha>\frac 12$, then the minimax optimal learning rate of regularized least square algorithm
is $\calO(|D|^{-\frac {2\alpha}{4\alpha r +1}}).$
It is also proved that RKN can achieve minimax rate if $\frac 12 <r \le 1.$
 When $r=\frac 12$,  they obtained a suboptimal rate
$\calO\left(\left(\frac{\log |D|} {|D|}\right)^{-\frac {2\alpha}{2\alpha+1}}\right).$
In \citep{steinwart2009optimal}, under the additional restriction
$$ \|f\|_\infty \le C \|f\|_K^{\frac 1 {2\alpha}} \|f\|_\L2p^{1-\frac 1 {2\alpha}}, \qquad \forall \ f\in\H_K$$
it is proved that the projected (or clipped) RKN estimator can achieve the minimax learning rate.
More recently, in \citep{LGZ16} it is proved that RKN can achieve minimax learning rate
for $r$ in the whole range of $[\frac 12, 1]$ without any restrictions except for the conditions \eqref{regularitycondition}
and $\tau_i\sim i^{-2\alpha}$ and thus improves the results in \citep{caponnetto2007optimal, steinwart2009optimal}.
When $r\ge 1,$ RKN suffers the saturation effect and the learning rate will not improve.
Note our condition \eqref{effecdim} on the effective dimension is nearly equivalent to $\tau_i\sim i^{-2\alpha}$
with $\beta = \frac 1 {2\alpha}$. The result in Corollary \ref{rate} tells that
BCRKN can achieve the minimax learning rate for $r\in [\frac 12, 2]$ and thus relaxes the
saturation effect of RKN.

For distributed regression problem, assume all data blocks $D_i,\ i=1,\ldots, m$, are of equal size.
If  RKN is used as the base algorithm, under the assumptions that $\bE[|\phi_i(x)|^{2k}] \le A^{2k}$
for some $k>2$ and constant $A<\infty$, $\lambda_i \le a i^{-2\alpha}$, and $f^* \in \H_K$ (i.e. $r=\frac 12$), it is proved
in \citep{zhang2015divide} that the optimal learning rate of $\calO(n^{- \frac{2\alpha}{2\alpha+1}})$
can be achieved by choosing  $\lambda = |D|^{-\frac{2\alpha}{2\alpha+1}}$  and
restricting the number of local processors
$$m \le c_\alpha \left(\frac{|D|^{\frac{2(k-4)\alpha-k}{2\alpha+1}}}{A^{4k}\log^k |D|}\right)^{\frac 1 {k-2}}.$$
Later in \citep{LGZ16} the regularity condition \eqref{regularitycondition} was taken into consideration
and it is proved that the distributed regression can achieve the minimax optimal rate for
all $r\in[\frac 12, 1]$ if
\begin{equation}\label{eq:mRKN}
m \le |D|^{\min\{\frac{6\alpha(2r-1)+1}{5(4\alpha r +1)}, \frac{2\alpha(2r-1)}{4\alpha r +1}\}}.
\end{equation}
The method suffers from the saturation effect inherited from RKN. So the learning rate cannot improve
with $r>1.$ When BCRKN is applied as the base algorithm for distributed regression,
the saturation effect is relaxed and the minimax optimal learning rate can be achieved
for the whole range $r\in [\frac 12, 2]$ as in the single data learning case.
Compare \eqref{eq:mBCRKN} with \eqref{eq:mRKN} and
we see our analysis also relaxes the restriction on the number  $m$ of local processors.

When $r<1$ we notice that distributed regression with RKN and BCRKN both
reach the optimal rates by underregularization, that is, selecting the
regularization parameter according to the number of all observations $|D|$,
not the number of observations in each block $|D_i|.$
But due to the reduced bias the parameter selection of BCRKN is less sensitive
and thus could be advantageous in practice.
We show this by an illustrative example used in \citep{zhang2015divide}.
Consider the model $f^*(x) = \min\{x, 1-x\}$ with $x\,{\small\sim}\,\hbox{Uniform}[0, 1]$
and the noise $\epsilon\,{\sim}\,N(0, \sigma^2)$  with $\sigma^2 =\frac 15.$
Let $K(x, t) = 1+\min\{x, t\}$. Then $f^*\in \H_K$ and $\|f^*\|_K=1.$
We first compare the distributed RKN and the distributed BCRKN
when $\lambda = |D|^{-2/3}$, a theoretically optimal choice.
We generate $|D| = 4098$ sample points and use number of partitions
$m\in\{2, 4, 8, 16, 32, 64, 128, 256, 512, 1024\}.$
The mean squared errors of two methods are plotted in Figure \ref{figzh} (a).
We see BCRKN slightly outperforms RKN for all $m$.

Recall that the analyses in \citep{zhang2015divide, LGZ16} and this paper
indicate the optimal choice of the regularization parameter is $\lambda = |D|^{-\theta}$
with $\theta$ an index depending on the regularity of the true target function $f^*$ and
the effective dimension of the integral operator $L_K$. Clearly both are unknown in practice
and thus a theoretical optimal choice of the regularization parameter is actually not available.
At the same time, in a big data setting where distributed regression is necessary
globally tuning the optimal parameter is either impossible or too time consuming.
A reasonable way is to tune the parameter locally to get optimal choice $\lambda_i = |D_i|^{-\theta}$
on $D_i$ and then underregularize it by using $\lambda = \lambda_i ^{\frac{\log |D|}{\log |D_i|}}=|D|^{-\theta}.$
So we next compare the use of RKN and BCRKN in distributed regression when
this parameter selection strategy is used.
The results are shown in Figure \ref{figzh} (b).
We see the requirement on the number of local processors becomes more restrictive for both methods,
indicating that underregularing locally optimal parameter does not lead to globally optimal parameter.
BCRKN significantly outperforms RKN as $m$ increases, indicating it is less sensitive to the parameter selection
when a globally optimal parameter is not available.

\begin{figure}[h]
\includegraphics[width=0.49\textwidth]{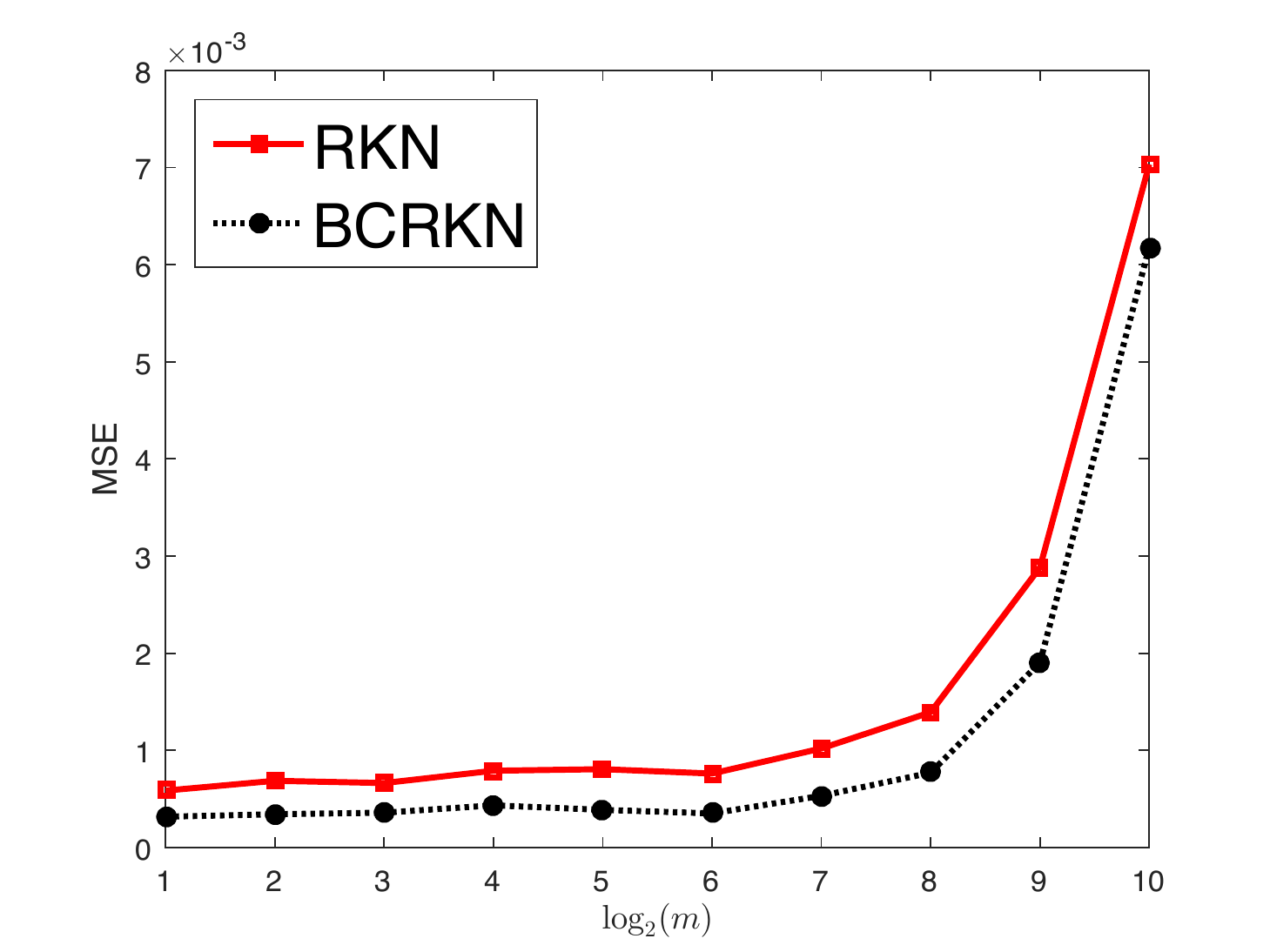}
\hfill
\includegraphics[width=0.49\textwidth]{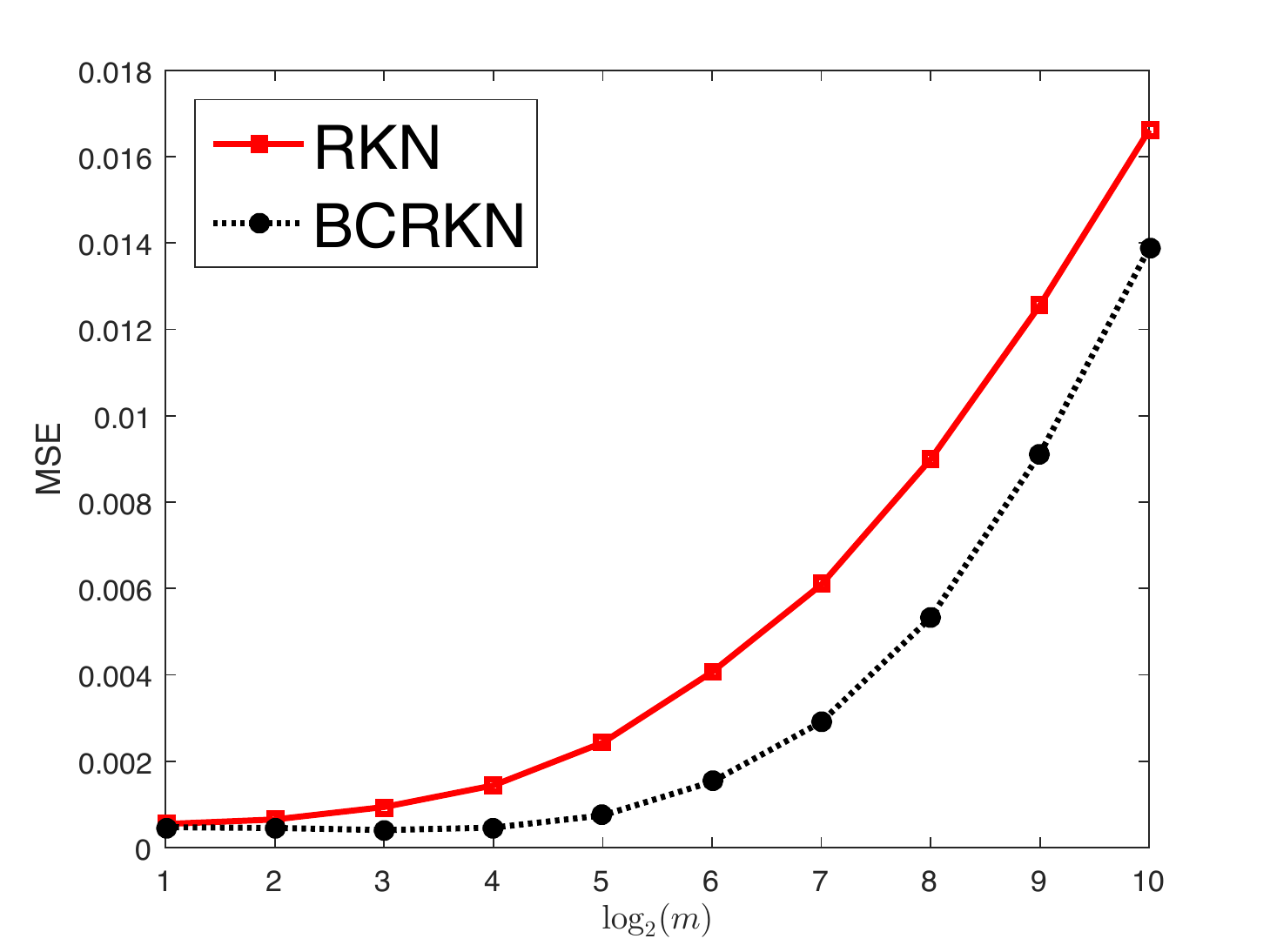}\\
\null\hfill (a) \hfill \hfill (b) \hfill \null \\
\caption{MSE of distributed RKN and distributed BCRKN. (a) $\lambda=|D|^{-2/3}$ is used.
(b)  $\lambda$ is  first tuned locally and the underregularized.
\label{figzh}}
\end{figure}

Finally, note that the upper bounds on the number of local processors are constrained
by the analysis techniques, not necessarily reflect the true limit
on the number of local processors allowed in practice.
Also, since underregularization is necessary in distributed regression but
globally tuning the optimal parameter is impractical,
the impact of parameter selection strategy on the number of local processors
is unknown. Further investigation on these issues is of great interest in future research.


\section{Preliminary lemmas}\label{sec:pre}

\begin{lemma}\label{lem: concentration inequality}
Let ${D}$ be a sample drawn independently according to $\rho$ and $g$ be a measurable bounded function on $\mathcal{Z}$ and $\xi_g$ be a random variable with values on $\mathcal{H}_K$ given by $\xi_g(z)=g(z)K_x$ for $z=(x,y)\in \mathcal{Z}.$
For any $0<\delta<1,$ with confidence at least $1-\delta,$ there holds
$$\left\|(\lambda I+L_K)^{-\frac12}\left(\frac{1}{|{D}|}\sum_{z\in D} \xi_g(z)-\bE[\xi_g]\right)\right\|_K
\le \frac{\|g\|_\infty \log\frac{2}{\delta}}{\kappa}\B_{|D|,\lambda}$$
\end{lemma}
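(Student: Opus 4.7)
The plan is to apply a Pinelis--Bernstein type concentration inequality for Hilbert space--valued random variables to the random element $\eta(z) := (\lambda I + L_K)^{-1/2}\xi_g(z) \in \mathcal{H}_K$. Since $(\lambda I + L_K)^{-1/2}$ is a bounded self-adjoint operator, the sample mean $\frac{1}{|D|}\sum_{z\in D}\eta(z)$ minus $\bE[\eta]$ is exactly the quantity being controlled, so applying Pinelis--Bernstein (in the form used routinely in \citep{smale2007learning, caponnetto2007optimal}) will give, with confidence $1-\delta$, a bound of the shape $2\widetilde M\log(2/\delta)/|D| + \sqrt{2\sigma^2\log(2/\delta)/|D|}$, where $\widetilde M$ is an almost sure bound on $\|\eta\|_K$ and $\sigma^2$ an upper bound on $\bE\|\eta\|_K^2$. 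Everything reduces to producing $\widetilde M$ and $\sigma^2$ of the right order.

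For $\widetilde M$, the reproducing property yields $\|\eta(z)\|_K^2 = g(z)^2\langle(\lambda I + L_K)^{-1}K_x, K_x\rangle_K$; since the spectrum of $(\lambda I + L_K)^{-1}$ lies in $(0, 1/\lambda]$ and $K(x,x)\le\kappa^2$, this is at most $\kappa^2\|g\|_\infty^2/\lambda$, giving $\widetilde M = \kappa\|g\|_\infty/\sqrt{\lambda}$. For $\sigma^2$, I would rewrite the inner product as a trace, take expectation using the identity $\bE_x[K_x\otimes K_x] = L_K$, and bound $g^2$ by $\|g\|_\infty^2$ to obtain $\sigma^2 \le \|g\|_\infty^2\operatorname{Tr}\bigl((\lambda I + L_K)^{-1}L_K\bigr) = \|g\|_\infty^2\mathcal N(\lambda)$.

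Substituting these into the Pinelis--Bernstein bound produces the two terms $\frac{2\kappa\|g\|_\infty\log(2/\delta)}{|D|\sqrt{\lambda}}$ and $\|g\|_\infty\sqrt{\frac{2\mathcal N(\lambda)\log(2/\delta)}{|D|}}$, which collapse to $\frac{\|g\|_\infty\log(2/\delta)}{\kappa}\B_{|D|,\lambda}$ after using the elementary inequality $\sqrt{2\log(2/\delta)}\le 2\log(2/\delta)$, valid for every $\delta\in(0,1)$ since $\log(2/\delta) > \log 2 > 1/2$. I do not anticipate a real obstacle: the argument is a textbook application of vector-valued Bernstein, and the only mild care needed is to verify that the constants fold into the prescribed form of $\B_{|D|,\lambda}$, which the above trivial inequality delivers.
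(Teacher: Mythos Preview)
Your argument is correct and is exactly the standard proof of this concentration lemma. The paper itself does not supply a proof of this statement: it is listed among the preliminary lemmas in Section~\ref{sec:pre} as a known result, essentially taken from \citep{smale2007learning, caponnetto2007optimal, GLZ16}. Your use of the Pinelis--Bernstein inequality in the Smale--Zhou form (which bounds the deviation from the mean using an almost-sure bound $\widetilde M$ on $\|\eta\|_K$ and a second-moment bound $\sigma^2$, without centering), together with the identifications $\widetilde M=\kappa\|g\|_\infty/\sqrt{\lambda}$ and $\sigma^2=\|g\|_\infty^2\mathcal N(\lambda)$ via the trace identity $\bE_x[K_x\otimes K_x]=L_K$, reproduces the two terms of $\B_{|D|,\lambda}$ on the nose; the absorption of $\sqrt{2\log(2/\delta)}$ into $2\log(2/\delta)$ is valid for all $\delta\in(0,1)$ as you note.
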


\begin{lemma}\label{lem: DeltaD bound}
Let  $D$ be a sample drawn independently according to $\rho.$ If $|y|\le M$ almost surely, then
with confidence at least $1-\delta,$ there holds
\[
      \left\|(\lambda I+L_K)^{-\frac12}\tfrac{1}{|D|} (S_D^* \by_D - L_{K, D} f^*) \right\|_K 
            \le 2M\mathcal B_{|D|,\lambda}\log\frac{2}{\delta}.
\]
\end{lemma}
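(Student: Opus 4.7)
The plan is to deduce Lemma~\ref{lem: DeltaD bound} as a direct specialization of Lemma~\ref{lem: concentration inequality} by choosing the function $g$ so that the empirical average in Lemma~\ref{lem: concentration inequality} coincides with $\tfrac{1}{|D|}(S_D^*\by_D - L_{K,D} f^*)$.

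First I would unfold the two operators pointwise: by the definitions of $S_D^*$ and $L_{K,D} = \tfrac{1}{|D|} S_D^* S_D$, we have
\begin{equation*}
\tfrac{1}{|D|}(S_D^* \by_D - L_{K,D} f^*) = \tfrac{1}{|D|}\sum_{i=1}^{|D|}\bigl(y_i - f^*(x_i)\bigr) K_{x_i}.
\end{equation*}
This suggests taking $g(z) = y - f^*(x)$ for $z=(x,y)\in\calX\times\RR$, so that $\xi_g(z) = (y - f^*(x))K_x$ and $\tfrac{1}{|D|}\sum_{z\in D}\xi_g(z) = \tfrac{1}{|D|}(S_D^*\by_D - L_{K,D} f^*)$.

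Second, I would verify the mean-zero property $\bE[\xi_g]=0$ required so that the left-hand side of Lemma~\ref{lem: concentration inequality} matches the expression to be bounded. Since the sampling model is $y_i = f^*(x_i) + \epsilon_i$ with $\bE[\epsilon_i\mid x_i]=0$, we have $\bE[y\mid x]=f^*(x)$, hence $\bE[\xi_g] = \bE_x[(\bE[y\mid x] - f^*(x))K_x]=0$. Third, I would bound $\|g\|_\infty$: the almost-sure bound $|y|\le M$ and the fact that $f^*$ is (a version of) the conditional mean $\bE[y\mid x]$ together give $|f^*(x)|\le M$ almost surely, so $|g(z)|\le |y|+|f^*(x)|\le 2M$.

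Finally, plugging these into Lemma~\ref{lem: concentration inequality} yields, with confidence at least $1-\delta$,
\begin{equation*}
\left\|(\lambda I+L_K)^{-\frac12}\tfrac{1}{|D|}(S_D^*\by_D - L_{K,D} f^*)\right\|_K \le \frac{2M\log\tfrac{2}{\delta}}{\kappa}\B_{|D|,\lambda},
\end{equation*}
and the assumption $\kappa\ge 1$ absorbs the $1/\kappa$ factor to give the stated bound. There is no real obstacle here; the only delicate point is ensuring the identification $f^*(x)=\bE[y\mid x]$ on a full-measure set, which is built into the regression model stated in Section~1. Modulo this identification, the lemma is essentially a one-line consequence of Lemma~\ref{lem: concentration inequality}.
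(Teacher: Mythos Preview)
Your proposal is correct and is exactly the intended derivation: the paper states Lemma~\ref{lem: DeltaD bound} without proof among the preliminary lemmas, but it is precisely the specialization of Lemma~\ref{lem: concentration inequality} with $g(z)=y-f^*(x)$ that you describe, together with the standing assumption $\kappa\ge 1$. The only cosmetic point is that the expression in the lemma should be read as $\Delta_D=\tfrac{1}{|D|}S_D^*(\by_D-S_Df^*)=\tfrac{1}{|D|}S_D^*\by_D-L_{K,D}f^*$ (cf.\ the definition of $\Delta_D$ at the start of Section~\ref{sec:bigr}), which is what you computed.
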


\begin{lemma}\label{lem: operator difference}
Let $D$ be a sample drawn independently according to $\rho.$ If $|y|\le M$ almost surely, then
for any $0<\delta <1,$
with confidence at least $1-\delta,$ there holds
\begin{equation}\label{operator difference norm 2}
       \Xi_D:=  \|(\lambda I+L_K)^{-\frac12}(L_K-L_{K,D})\|\le
          \mathcal B_{|D|,\lambda}\log\frac{2}{\delta},
\end{equation}
\end{lemma}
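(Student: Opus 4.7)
The plan is to apply a Bernstein-type concentration inequality for random variables valued in a Hilbert space (e.g., the Pinelis--Sakhanenko inequality) to the i.i.d.\ sum representing $L_{K,D}-L_K$, after pre-multiplication by $(\lambda I+L_K)^{-1/2}$. Writing
\begin{equation*}
L_{K,D}=\frac{1}{|D|}\sum_{i=1}^{|D|} K_{x_i}\otimes K_{x_i}, \qquad L_K=\bE_x[K_x\otimes K_x],
\end{equation*}
I would introduce the Hilbert--Schmidt operator valued random variable
\begin{equation*}
\zeta(x):=(\lambda I+L_K)^{-1/2}(K_x\otimes K_x),
\end{equation*}
so that $(\lambda I+L_K)^{-1/2}(L_K-L_{K,D})=\bE[\zeta(x)]-\frac{1}{|D|}\sum_{i=1}^{|D|}\zeta(x_i)$. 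Since the operator norm is dominated by the Hilbert--Schmidt norm, it suffices to control this centered empirical sum in the HS norm.

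The next step is to extract the two ingredients that Bernstein's inequality in a Hilbert space requires. For the almost-sure bound, $\|K_x\otimes K_x\|_{HS}=K(x,x)\le\kappa^2$ and $\|(\lambda I+L_K)^{-1/2}\|\le\lambda^{-1/2}$, so each centered term is bounded in HS norm by a multiple of $\kappa^2/\sqrt{\lambda}$. For the second moment, using $\bE[K_x\otimes K_x]=L_K$,
\begin{equation*}
\bE\|\zeta(x)\|_{HS}^2=\bE\bigl[K(x,x)\,\mathrm{Tr}\bigl((\lambda I+L_K)^{-1}(K_x\otimes K_x)\bigr)\bigr]\le\kappa^2\,\mathrm{Tr}\bigl((\lambda I+L_K)^{-1}L_K\bigr)=\kappa^2\mathcal{N}(\lambda).
\end{equation*}

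Plugging these two bounds into the Hilbert-space Bernstein inequality yields, with probability at least $1-\delta$,
\begin{equation*}
\Xi_D\le\left\|\frac{1}{|D|}\sum_{i=1}^{|D|}\bigl(\zeta(x_i)-\bE[\zeta(x_i)]\bigr)\right\|_{HS}\le C_1\frac{\kappa^2\log(2/\delta)}{|D|\sqrt{\lambda}}+C_2\,\kappa\sqrt{\frac{\mathcal{N}(\lambda)\log(2/\delta)}{|D|}}
\end{equation*}
for absolute constants $C_1,C_2$, which has the same structure as $\mathcal{B}_{|D|,\lambda}\log(2/\delta)$ and establishes the claim up to constants.

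The main obstacle is not conceptual but bookkeeping: reconciling the constants produced by the chosen Bernstein variant with the explicit prefactor $2\kappa$ outside the braces in the definition of $\mathcal{B}_{|D|,\lambda}$, and arranging the two terms to share a common logarithmic factor of $\log(2/\delta)$ rather than $\log(2/\delta)$ and $\sqrt{\log(2/\delta)}$ separately. The standard trick of bounding $\sqrt{ab}\le\tfrac12(a+b)$ together with $\log(2/\delta)\ge\log 2$ absorbs the square-root factor, and tuning the Bernstein constants to $2$ throughout delivers the exact stated inequality.
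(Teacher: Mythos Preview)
The paper does not actually supply its own proof of this lemma; it is listed in Section~\ref{sec:pre} as a preliminary result, alongside Lemmas~\ref{lem: concentration inequality} and~\ref{lem: DeltaD bound}, and is treated as known from the cited literature. Your approach---bounding the operator norm by the Hilbert--Schmidt norm, writing $(\lambda I+L_K)^{-1/2}(L_K-L_{K,D})$ as a centered i.i.d.\ sum of the HS-valued variables $\zeta(x)=(\lambda I+L_K)^{-1/2}(K_x\otimes K_x)$, and feeding the almost-sure bound $\kappa^2/\sqrt{\lambda}$ and the second-moment bound $\kappa^2\mathcal N(\lambda)$ into a Pinelis--Bernstein inequality---is precisely the standard argument behind this estimate and is correct.

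The only point to watch is the one you already identified: matching the exact constant $2\kappa$ and the uniform $\log(2/\delta)$ factor depends on which version of the Hilbert-space Bernstein inequality you invoke. The form most commonly used in this literature (as in Caponnetto--De~Vito or Smale--Zhou) already delivers both terms with a full $\log(2/\delta)$ rather than $\sqrt{\log(2/\delta)}$ on the variance term, so no additional square-root absorption trick is needed; with that version the bound drops out in the stated form directly.
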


If $A$ and $B$ are invertible operators on a Banach space,
then by the second order operator decomposition proposed in \citep{LGZ16}, we have
\begin{eqnarray}\label{Second order}
          A^{-1}-B^{-1}
          =
          B^{-1}(B-A)B^{-1}(B-A)A^{-1}+B^{-1}(B-A)B^{-1}.
\end{eqnarray}
This implies the following decomposition of the operator product 
\begin{equation}\label{operator product based on second order}
     BA^{-1}
   =(B-A)B^{-1}(B-A)A^{-1}+(B-A)B^{-1}+I.
\end{equation}
With $A=L_{K,D}+\lambda I$ and $B=L_{K}+\lambda I$ in (\ref{operator product based on second order}),
and applying Lemma \ref{lem: operator difference}, we have the following bound for $\|(L_K+\lambda I)(L_{K,D}+\lambda I)^{-1}\|$;
for the detailed proof see \citep{GLZ16}.

\begin{proposition}\label{prop: operator product bound}
For any $0<\delta<1,$ with confidence at least
$1-\delta,$ there holds
\[
         \Omega_D:= \|(L_K+\lambda I)(L_{K,D}+\lambda I)^{-1}\|\le
          \left(\frac{\mathcal B_{|D|,\lambda}\log\frac{2}{\delta}}
           {\sqrt{\lambda}}   +1\right)^2.
\]
Moreover, the confidence set is the same as that in Lemma \ref{lem: operator difference}.
\end{proposition}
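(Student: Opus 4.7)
The plan is to apply the operator product decomposition (\ref{operator product based on second order}) directly with $A=L_{K,D}+\lambda I$ and $B=L_K+\lambda I$. Under this choice we have $B-A = L_K - L_{K,D}$, which is precisely the self-adjoint operator whose norm (weighted by $(\lambda I+L_K)^{-1/2}$) is controlled by Lemma \ref{lem: operator difference}. Writing the identity (\ref{operator product based on second order}) as
\begin{equation*}
(L_K+\lambda I)(L_{K,D}+\lambda I)^{-1} = T_1 + T_2 + I,
\end{equation*}
where $T_1 = (L_K-L_{K,D})(L_K+\lambda I)^{-1}(L_K-L_{K,D})(L_{K,D}+\lambda I)^{-1}$ and $T_2 = (L_K-L_{K,D})(L_K+\lambda I)^{-1}$, reduces the problem to norm-bounding these two pieces.

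For $T_1$, I would insert $(L_K+\lambda I)^{-1} = (L_K+\lambda I)^{-1/2}(L_K+\lambda I)^{-1/2}$ in the middle and split, using that $L_K-L_{K,D}$ is self-adjoint so that $\|(L_K-L_{K,D})(L_K+\lambda I)^{-1/2}\| = \|(L_K+\lambda I)^{-1/2}(L_K-L_{K,D})\| = \Xi_D$. The third factor $(L_{K,D}+\lambda I)^{-1}$ has norm at most $1/\lambda$, giving $\|T_1\| \le \Xi_D^2/\lambda$. For $T_2$, the same self-adjointness trick together with $\|(L_K+\lambda I)^{-1/2}\|\le \lambda^{-1/2}$ yields $\|T_2\|\le \Xi_D/\sqrt{\lambda}$. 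The triangle inequality then gives $\Omega_D \le \Xi_D^2/\lambda + \Xi_D/\sqrt{\lambda} + 1$.

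To finish, I would invoke Lemma \ref{lem: operator difference}, which on an event of probability at least $1-\delta$ supplies $\Xi_D \le \mathcal B_{|D|,\lambda}\log\frac{2}{\delta}$. Substituting and using $x^2+x+1 \le x^2+2x+1 = (x+1)^2$ with $x = \mathcal B_{|D|,\lambda}\log(2/\delta)/\sqrt{\lambda}$ produces exactly the claimed bound
\begin{equation*}
\Omega_D \le \left(\frac{\mathcal B_{|D|,\lambda}\log\frac{2}{\delta}}{\sqrt{\lambda}}+1\right)^{\!2}.
\end{equation*}
Because the only stochastic ingredient is the control of $\Xi_D$, the resulting confidence set coincides with that of Lemma \ref{lem: operator difference}, establishing the last assertion of the proposition.

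No step looks genuinely hard; the whole argument is essentially algebraic once the decomposition (\ref{operator product based on second order}) is in hand. The one point that requires care is verifying the self-adjointness maneuver used to identify $\|(L_K-L_{K,D})(L_K+\lambda I)^{-1/2}\|$ with $\Xi_D$, and making sure the $1/\lambda$ loss lands on $(L_{K,D}+\lambda I)^{-1}$ rather than on $(L_K+\lambda I)^{-1/2}$ applied to the high-probability factor, since otherwise the bound would degrade. Beyond that, the proof is a one-line application of the triangle inequality followed by Lemma \ref{lem: operator difference}.
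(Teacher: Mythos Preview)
Your proposal is correct and follows exactly the approach the paper indicates: substitute $A=L_{K,D}+\lambda I$, $B=L_K+\lambda I$ into the second-order decomposition \eqref{operator product based on second order}, bound the two nontrivial terms via $\Xi_D$ using self-adjointness of $L_K-L_{K,D}$ together with $\|(L_{K,D}+\lambda I)^{-1}\|\le\lambda^{-1}$ and $\|(L_K+\lambda I)^{-1/2}\|\le\lambda^{-1/2}$, and then invoke Lemma~\ref{lem: operator difference}. The paper gives precisely this outline and refers to \cite{GLZ16} for the details you have filled in.
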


\begin{lemma} \label{P2E}
Let $Q$ be positive random variable.
If there are constants $a>0, b>0, \tau>0$ such that for any $0<\delta\le 1$, with confident at least $1-\delta,$ there holds
$Q\le a (\log \frac b \delta)^\tau,$ then for any $s>0$ we have
$\bE[Q^s] \le a^s b \Gamma(\tau s +1).$
\end{lemma}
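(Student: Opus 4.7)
The strategy is to convert the given in--confidence statement into a pointwise tail inequality of the form $P(Q>t)\le b\exp(-(t/a)^{1/\tau})$ and then apply the layer--cake representation $\bE[Q^s]=\int_0^\infty st^{s-1}P(Q>t)\,dt$. To obtain the tail inequality I would invert the relation $t=a(\log(b/\delta))^\tau$ to get $\delta=b\exp(-(t/a)^{1/\tau})$; whenever this $\delta$ lies in $(0,1]$, i.e.\ for $t\ge a(\log b)^\tau$, the hypothesis directly yields $P(Q>t)\le\delta=b\exp(-(t/a)^{1/\tau})$. The key observation is that for smaller $t$ the right--hand side $b\exp(-(t/a)^{1/\tau})$ already exceeds $1$, so the same bound still holds trivially since any probability is at most $1$. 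Hence the exponential tail bound is valid uniformly on $[0,\infty)$. (If $b<1$, one can replace $b$ by $1$ throughout, which only weakens the hypothesis, so we may assume $b\ge 1$ without loss of generality.)

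Plugging this into the layer--cake formula, the expectation is bounded by $b\int_0^\infty st^{s-1}\exp(-(t/a)^{1/\tau})\,dt$. Applying the substitution $v=(t/a)^{1/\tau}$, so that $t=av^\tau$ and $dt=a\tau v^{\tau-1}\,dv$, transforms this into a standard Gamma integral $s\tau a^s b\int_0^\infty v^{\tau s-1}e^{-v}\,dv$, which equals $s\tau a^s b\,\Gamma(\tau s)=a^s b\,\Gamma(\tau s+1)$ via the functional equation $\Gamma(x+1)=x\Gamma(x)$. This is exactly the stated inequality.

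The argument is essentially mechanical; the only conceptual subtlety, and the thing I would want to highlight in order to avoid a spurious additive term of the form $a^s(\log b)^{\tau s}$ that a naive split of the integration range would produce, is the observation that the exponential tail bound can be extended trivially to the region $\{t<a(\log b)^\tau\}$ where it is larger than $1$. Once that is noted, the change of variable absorbs all constants cleanly and I do not anticipate any further obstacle.
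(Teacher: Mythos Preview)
Your proposal is correct and follows essentially the same route as the paper: both convert the confidence statement into an exponential tail bound and then integrate via the layer--cake formula to obtain the Gamma integral. The only cosmetic difference is that the paper applies the layer--cake representation to $Q^{1/\tau}$ (whose tail is a plain exponential $b\exp(-t/a^{1/\tau})$), whereas you apply it to $Q$ and then undo the stretching by the substitution $v=(t/a)^{1/\tau}$; your explicit handling of the range $t<a(\log b)^\tau$ is a point the paper leaves implicit.
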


\begin{proof}
Note the condition implies that  for all $t>0$ there is
$$\Pr\left[Q^{\frac 1 \tau} >t\right] \le b \exp\left(-\frac {t}{a^{1/\tau}}\right).$$ So we have
\begin{eqnarray*}
\bE[Q^s]  & =  & \bE\left[\left(Q^{1/\tau}\right)^{\tau s} \right ]
 =  \tau s \dint_0^\infty t^{\tau s -1}  \Pr\left[Q^{\frac 1 \tau} >t\right]  dt \\
& \le & \tau s  b \dint_0^\infty t^{\tau s -1}  \exp \left(-\frac {t} {a^{1/\tau}}\right) dt  \\
&  = & b  \tau s  \Gamma (\tau s) a ^s = a^s b \Gamma (\tau s +1).
\end{eqnarray*}
This proves the lemma.
\end{proof}


\section{Error analysis of BCRKN in $\L2p$ when $r\ge \frac 12$}
\label{sec:bigr}

We will split proof of Theorem \ref{thm:error2} into three cases: $0<r<\frac 12$, $\frac 1 2 \le r\le \frac 32$, and  $\frac 32 \le r\le 2.$
In this section we prove it for the second and third cases while leave the first case to Section \ref{sec:semi}.
Denote $\Delta_D = \frac 1 {|D|} S_D^* ( \by_D - S_D f^*).$

\begin{proposition}
\label{prop: error decomposition2}
If $\frac12 \le r \le \frac32,$  we have
\begin{eqnarray*}
\|f_{D,\lambda}^\sharp-f^*\|_\L2p \le 2\Omega_D \left\|(\lambda I+L_K) ^{-\frac12}\Delta_D\right\|_K +\lambda^r (\Omega_D)^{r}  \|u^*\|_\L2p.
\end{eqnarray*}
\end{proposition}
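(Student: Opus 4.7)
The plan is to start from the operator form \eqref{eq:bcrkn}. Substituting $\tfrac{1}{|D|}S_D^*\by_D=L_{K,D}f^*+\Delta_D$ into \eqref{eq:rknop} gives $f_{D,\lambda}=(I-P)f^*+\lambda^{-1}P\Delta_D$ with $P=\lambda(\lambda I+L_{K,D})^{-1}$, so $f_{D,\lambda}^\sharp=(I+P)f_{D,\lambda}=(I-P^2)f^*+\lambda^{-1}(P+P^2)\Delta_D$, and I obtain the clean decomposition
\begin{equation*}
f_{D,\lambda}^\sharp-f^*=-\lambda^2(\lambda I+L_{K,D})^{-2}f^*+(\lambda I+L_{K,D})^{-1}\Delta_D+\lambda(\lambda I+L_{K,D})^{-2}\Delta_D.
\end{equation*}
Since $r\ge\tfrac12$ forces $f^*\in\H_K$, every summand lies in $\H_K$, and the $\L2p$-norm equals $\|L_K^{1/2}\cdot\|_K$ on such functions.

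For the two $\Delta_D$ terms the key ingredients are $\|L_K^{1/2}(\lambda I+L_K)^{-1/2}\|\le 1$, $\|(\lambda I+L_K)^{1/2}(\lambda I+L_{K,D})^{-1/2}\|\le\sqrt{\Omega_D}$ (this squared norm equals the norm of the self-adjoint operator $(\lambda I+L_{K,D})^{-1/2}(\lambda I+L_K)(\lambda I+L_{K,D})^{-1/2}$, whose largest eigenvalue is the spectral radius of $(\lambda I+L_K)(\lambda I+L_{K,D})^{-1}$, hence $\le\Omega_D$), and $\|(\lambda I+L_{K,D})^{-1/2}\|\le\lambda^{-1/2}$. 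Splitting
\[
L_K^{1/2}(\lambda I+L_{K,D})^{-1}=\bigl[L_K^{1/2}(\lambda I+L_K)^{-1/2}\bigr]\bigl[(\lambda I+L_K)^{1/2}(\lambda I+L_{K,D})^{-1}(\lambda I+L_K)^{1/2}\bigr](\lambda I+L_K)^{-1/2}
\]
bounds the first sample term by $\Omega_D\|(\lambda I+L_K)^{-1/2}\Delta_D\|_K$, and an analogous splitting (with the middle factor replaced by $(\lambda I+L_K)^{1/2}\lambda(\lambda I+L_{K,D})^{-2}(\lambda I+L_K)^{1/2}$, whose norm is again $\le\Omega_D$) yields the same estimate for $\lambda L_K^{1/2}(\lambda I+L_{K,D})^{-2}\Delta_D$. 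Summing gives the $2\Omega_D\|(\lambda I+L_K)^{-1/2}\Delta_D\|_K$ contribution in the proposition.

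For the bias term I write $f^*=L^r u^*=L_K^{r-1/2}g$ with $g=L^{1/2}u^*\in\H_K$ and $\|g\|_K\le\|u^*\|_\L2p$, reducing the task to showing $\bigl\|\lambda^2 L_K^{1/2}(\lambda I+L_{K,D})^{-2}L_K^{r-1/2}\bigr\|\le\lambda^r\Omega_D^r$. I factor this as the product of
$L_K^{1/2}(\lambda I+L_{K,D})^{-1}$, $(\lambda I+L_{K,D})^{-(3/2-r)}$, $(\lambda I+L_{K,D})^{-(r-1/2)}(\lambda I+L_K)^{r-1/2}$, and $(\lambda I+L_K)^{-(r-1/2)}L_K^{r-1/2}$, with operator norms at most $\sqrt{\Omega_D/\lambda}$, $\lambda^{-(3/2-r)}$, $\Omega_D^{r-1/2}$, and $1$ respectively. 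Multiplying and combining with the prefactor $\lambda^2$ gives exactly $\lambda^r\Omega_D^r$.

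The main obstacle is the third estimate $\|(\lambda I+L_{K,D})^{-(r-1/2)}(\lambda I+L_K)^{r-1/2}\|\le\Omega_D^{r-1/2}$ at fractional exponent: this is the Cordes/Heinz--Kato inequality $\|A^tB^t\|\le\|AB\|^t$ for positive self-adjoint $A,B$ and $t\in[0,1]$, applied with $A=(\lambda I+L_{K,D})^{-1}$ and $B=\lambda I+L_K$. The constraint $t=r-\tfrac12\in[0,1]$ is exactly what pins the range of $r$ in this proposition to $[\tfrac12,\tfrac32]$; without the fractional-power inequality one can only allocate integer powers of $\Omega_D$ and $\lambda^{-1}$, which forces a mismatch with the sharp $\lambda^r\Omega_D^r$ target.
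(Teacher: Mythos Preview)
Your proof is correct and follows essentially the same route as the paper's: the decomposition $f_{D,\lambda}^\sharp-f^*=(2\lambda I+L_{K,D})(\lambda I+L_{K,D})^{-2}\Delta_D-\lambda^2(\lambda I+L_{K,D})^{-2}f^*$ is exactly the paper's split into $f_{D,\lambda}^\sharp-\bE^*[f_{D,\lambda}^\sharp]$ and $\bE^*[f_{D,\lambda}^\sharp]-f^*$, and both the variance bound (via the $\Omega_D^{1/2}$ sandwich) and the bias bound (via the Cordes/Heinz inequality $\|A^tB^t\|\le\|AB\|^t$ with $t=r-\tfrac12$) coincide with the paper's argument. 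The only cosmetic differences are that you split the variance term into its two summands and bound each by $\Omega_D$ separately, whereas the paper keeps $(2\lambda I+L_{K,D})(\lambda I+L_{K,D})^{-1}$ intact and extracts the factor $2$ from $\|(\lambda I+L_{K,D})^{-1/2}(2\lambda I+L_{K,D})(\lambda I+L_{K,D})^{-1/2}\|\le 2$, and for the bias you factor $(\lambda I+L_{K,D})^{-2}$ as $(\lambda I+L_{K,D})^{-1}\cdot(\lambda I+L_{K,D})^{-(3/2-r)}\cdot(\lambda I+L_{K,D})^{-(r-1/2)}$ while the paper uses $(\lambda I+L_{K,D})^{-1/2}\cdot(\lambda I+L_{K,D})^{r-2}\cdot(\lambda I+L_{K,D})^{-(r-1/2)}$; the resulting bound $\lambda^r\Omega_D^r$ is identical.
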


\begin{proof}
By the triangle inequality, we have
\begin{equation}
\label{eq:errorsplit}
\left\|f_{D,\lambda}^\sharp-f^*\right\|_\L2p\le \left\|f_{D,\lambda}^\sharp-\bE^*[f_{D,\lambda}^\sharp]\right\|_\L2p
+\left\|\bE^*[f_{D,\lambda}^\sharp]-f^*\right\|_\L2p,
\end{equation}
where
$ \bE^*[f_{D,\lambda}^\sharp]=(2\lambda I+L_{K,D})(\lambda I+L_{K,D})^{-2}L_{K,D} f^* $
is the conditional expectation with respect to $\by_D$ given $\bx_D.$

For the first term $\left\|f_{D,\lambda}^\sharp-\bE^*[f_{D,\lambda}^\sharp]\right\|_\L2p$,
noting that $2\lambda I+L_{K,D}$ and $(\lambda I+L_{K,D})^{-1}$ commute,
we have
\begin{eqnarray}
&& \left\|f_{D,\lambda}^\sharp-\bE^*[f_{D,\lambda}^\sharp]\right\|_\L2p 
=  \left\|L_K ^{\frac12} (2\lambda I+L_{K,D})(\lambda I+L_{K,D})^{-2}\Delta_D\right\|_K  \nonumber \\
& \le  &  \left\| (\lambda I+L_K) ^{\frac12}(\lambda I+L_{K,D}) ^{-\frac12} \right\|
\left\|(\lambda I+L_{K,D}) ^{-\frac12} (2\lambda I + L_{K,D})(\lambda I+L_{K,D}) ^{-\frac12}\right\|  \nonumber \\
& & \quad \times
     \left\| (\lambda I+L_{K,D}) ^{-\frac12}  (\lambda I+L_{K})^{\frac 12} \right\|
     \left\|(\lambda I+L_{K}) ^{-\frac12} \Delta \right\|_K \nonumber \\
& \le & 2 \Omega_D  
\left\|(\lambda I+L_K) ^{-\frac12}\Delta_D\right\|_K,
\label{eq:e1est}
\end{eqnarray}
here we have used the fact \citep{Blanchard2010} that
$$\|A^s B^s\|\le \|AB\|^s,\qquad 0\le s\le1,$$
for positive operators $A$ and $B$ on Hilbert spaces.

For  the second term,  we have  
\begin{eqnarray*}
\bE^*[f_{D,\lambda}^\sharp]-f^*=[(2\lambda I+L_{K,D})(\lambda I+L_{K,D})^{-2}L_{K,D}-I]f^*
=\lambda^2(\lambda I+L_{K,D})^{-2} f^*.
\end{eqnarray*}
By the regularity condition \eqref{regularitycondition},
\begin{eqnarray}
 \left\|\bE^*[f_{D,\lambda}^\sharp]-f^*\right\|_\L2p & =
&\lambda^2\left\|(\lambda I+L_{K,D})^{-2} L^r u^* \right\|_\L2p \nonumber\\
&\le &\lambda^2\left\|(\lambda I+L_K)^{\frac12}(\lambda I+L_{K,D})^{-2}L_K^{r-\frac12} L^{\frac12}u^*\right\|_K
\nonumber \\
&\le &\lambda^2 \left\|(\lambda I+L_{K})^{\frac12}(\lambda I+L_{K,D})^{-\frac12}\right\| \left\|(\lambda I+L_{K,D})^{-\frac32}L_K^{r-\frac12}\right\|  \left\|L^{\frac 12}u^*\right\|_K \nonumber \\
&\le &\lambda^2 (\Omega_D)^{\frac12} \left\|(\lambda I+L_{K,D})^{-\frac32}L_K^{r-\frac12}\right\|  \|u^*\|_\L2p. \label{eq:secondpart}
\end{eqnarray}

Since $\frac12\le r\le \frac32,$ we have
\begin{eqnarray*}
 &&\left\|(\lambda I+L_{K,D})^{-\frac32}L_K^{r-\frac12}\right\|\\
&=&\left\| (\lambda I+L_{K,D})^{r-2} (\lambda I+L_{K,D})^{-r+\frac12} (\lambda I+L_K)^{r-\frac12} (\lambda I+L_K)^{-r+\frac12} L_K^{r-\frac12}\right\|\\
&\le&\left\|(\lambda I+L_{K,D})^{r-2}\right\| \left\|(\lambda I+L_{K,D})^{-r+\frac12} (\lambda I+L_K)^{r-\frac12}\right\| \left\|(\lambda I+L_K)^{-r+\frac12} L_K^{r-\frac12}\right\|\\
&\le& \lambda^{r-2} (\Omega_D)^{r-\frac12}.
\end{eqnarray*}
Therefore,
\begin{equation}\label{E* case1}
\left\|\bE^*[f_{D,\lambda}^\sharp]-f^*\right\|_\L2p\le \lambda^r (\Omega_D)^{r}  \|u^*\|_\L2p.
\end{equation}
Then the conclusion follows by combining \eqref{eq:e1est} and \eqref{E* case1}.
\end{proof}

\begin{proposition}
\label{prop: error decomposition2-case3}
If  $\frac32\le r <2,$ we have
\begin{eqnarray*}
\|f_{D,\lambda}^\sharp-f^*\|_\L2p \le 2\Omega_D \left\|(\lambda I+L_K) ^{-\frac12}\Delta_D\right\|_K + \lambda \Xi_D
(\Omega_D)^{\frac32} \kappa^{2r-3} \|u^*\|_\L2p+\lambda^r \Omega_D \|u^*\|_\L2p.
\end{eqnarray*}
\end{proposition}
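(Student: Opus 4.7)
The plan is to mimic the proof of Proposition 5.1 and split only at the step where the $r \in [\tfrac12,\tfrac32]$ argument breaks down. First I would apply the same triangle-inequality decomposition \eqref{eq:errorsplit} into a ``variance'' piece $\|f_{D,\lambda}^\sharp - \bE^*[f_{D,\lambda}^\sharp]\|_\L2p$ and a ``bias'' piece $\|\bE^*[f_{D,\lambda}^\sharp] - f^*\|_\L2p$. The variance piece is independent of the exponent $r$, so the estimate \eqref{eq:e1est} carries over verbatim and produces the first summand $2\Omega_D \|(\lambda I + L_K)^{-\frac12}\Delta_D\|_K$ of the claimed bound.

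For the bias piece I would use $\bE^*[f_{D,\lambda}^\sharp] - f^* = \lambda^2 (\lambda I+L_{K,D})^{-2} f^*$ together with $f^* = L^r u^* = L_K^{r-\frac12} L^{\frac12} u^*$ (valid because $r \ge \tfrac12$) and $\|L^{\frac12} u^*\|_K = \|u^*\|_\L2p$. As in Proposition 5.1, replace the $\L2p$-norm by $\|(\lambda I+L_K)^{\frac12}\cdot\|_K$, reducing the task to controlling the operator norm
\[
\lambda^2 \, \bigl\|(\lambda I+L_K)^{\frac12} (\lambda I+L_{K,D})^{-2} L_K^{r-\frac12}\bigr\|.
\]
The obstruction for $r\in[\tfrac32,2]$ is that $r-\tfrac12 \in [1,\tfrac32] > 1$, so the bound $\|A^s B^s\|\le \|AB\|^s$ (which requires $s\le 1$) is no longer applicable to the factor $(\lambda I+L_{K,D})^{-r+\frac12}(\lambda I+L_K)^{r-\frac12}$ used in Proposition 5.1.

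My workaround is to peel off one full power of $L_K$ by writing $L_K^{r-\frac12} = L_K \cdot L_K^{r-\frac32}$, where the exponent $r-\tfrac32\in[0,\tfrac12]$ allows the safe estimate $\|L_K^{r-\frac32}\|\le \kappa^{2r-3}$; this is the source of the $\kappa^{2r-3}$ factor in the statement. Then I would split $L_K = (L_K - L_{K,D}) + L_{K,D}$, which yields two contributions. For the $(L_K - L_{K,D})$-piece, inserting $(\lambda I+L_K)^{\frac12}(\lambda I+L_K)^{-\frac12}$ next to the difference extracts a factor of $\Xi_D$ (via Lemma \ref{lem: operator difference}), while the remaining operator $(\lambda I+L_K)^{\frac12}(\lambda I+L_{K,D})^{-2}(\lambda I+L_K)^{\frac12}$ is bounded using $\|(\lambda I+L_K)^{\frac12}(\lambda I+L_{K,D})^{-\frac12}\|\le \Omega_D^{\frac12}$ and $\|(\lambda I+L_{K,D})^{-1}\|\le \lambda^{-1}$; together with the $\lambda^2$ prefactor this produces exactly $\lambda \Xi_D \Omega_D^{\frac32}\kappa^{2r-3}\|u^*\|_\L2p$. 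For the remaining $L_{K,D}$-piece, the key is that $(\lambda I+L_{K,D})^{-2}L_{K,D}$ shares spectrum with $L_{K,D}$, so functional calculus combined with the same $\Omega_D^{\frac12}$-type estimate, followed by the observation $(\lambda I+L_{K,D})^{-\tfrac32+r-\tfrac12}L_K^{r-\frac12}$ can be handled along the lines of Proposition~5.1 after the peeling-off, produces $\lambda^r\Omega_D\|u^*\|_\L2p$.

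The main obstacle is the penultimate step: controlling the operator product $(\lambda I+L_K)^{\frac12}(\lambda I+L_{K,D})^{-2} L_K^{r-\frac12}$ with $r-\tfrac12 > 1$. The proof hinges on first turning a single copy of $L_K$ into the $(L_K - L_{K,D})$ vs.\ $L_{K,D}$ dichotomy so that the remaining exponent on $L_K$ is at most $\tfrac12$, where the $\|A^s B^s\|\le \|AB\|^s$ estimate becomes legal; all other bounds are routine consequences of functional calculus, self-adjointness, and the definitions of $\Omega_D$ and $\Xi_D$.
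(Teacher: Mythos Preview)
Your plan for the variance piece and for the $(L_K-L_{K,D})$-part of the bias is sound, but your handling of the $L_{K,D}$-part is where the argument breaks down, and the paper in fact decomposes the bias term differently.

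Once you peel off $L_K^{r-\frac32}$ and are left with $\lambda^2 (\lambda I+L_K)^{\frac12}(\lambda I+L_{K,D})^{-2}L_{K,D}\cdot L_K^{r-\frac32}$, the factor $L_K^{r-\frac32}$ no longer commutes with anything involving $L_{K,D}$. Bounding it by $\kappa^{2r-3}$ and then applying functional calculus to $(\lambda I+L_{K,D})^{-2}L_{K,D}$ only yields a contribution of order $\lambda^{3/2}$ (for instance $\lambda^2\cdot\Omega_D^{1/2}\cdot\|L_{K,D}(\lambda I+L_{K,D})^{-3/2}\|\cdot\kappa^{2r-3}\lesssim \lambda^{3/2}\Omega_D^{1/2}\kappa^{2r-3}$), not the $\lambda^r$ you claim; for $r>3/2$ this is strictly too weak. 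Your reference to ``$(\lambda I+L_{K,D})^{-\frac32+r-\frac12}L_K^{r-\frac12}$'' does not match any expression that actually appears after the peeling and the $L_K=(L_K-L_{K,D})+L_{K,D}$ split, so it is unclear what mechanism is supposed to recover the correct power of $\lambda$. (One can salvage the idea by re-inserting $(\lambda I+L_{K,D})^{\pm(r-\frac32)}$ next to $L_K^{r-\frac32}$ and using the Cordes inequality on that small exponent, but this is not what you wrote, and it yields $\Omega_D^{r-1}$ rather than $\Omega_D$ in the last term.)

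The paper instead keeps $L_K^{r-\frac12}$ intact and splits one copy of the \emph{resolvent}. Starting from \eqref{eq:secondpart}, which already carries one factor $\Omega_D^{1/2}$, it writes
\[
(\lambda I+L_{K,D})^{-1}=(\lambda I+L_K)^{-1}+(\lambda I+L_K)^{-1}(L_K-L_{K,D})(\lambda I+L_{K,D})^{-1}.
\]
The point of this decomposition is that in the ``main'' piece $(\lambda I+L_{K,D})^{-\frac12}(\lambda I+L_K)^{-1}L_K^{r-\frac12}$ the factor $(\lambda I+L_K)^{-1}$ now \emph{commutes} with $L_K^{r-\frac12}$, so pure spectral calculus gives $\|(\lambda I+L_K)^{-\frac32}L_K^{r-\frac12}\|\le\lambda^{r-2}$ (this is exactly where $r\le 2$ is used), and one extra $\Omega_D^{1/2}$ from $(\lambda I+L_{K,D})^{-\frac12}(\lambda I+L_K)^{\frac12}$ yields $\lambda^r\Omega_D$ after multiplying by $\lambda^2\Omega_D^{1/2}$. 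The ``difference'' piece produces the $\Xi_D$-term, with $\kappa^{2r-3}$ coming from $\|(\lambda I+L_K)^{-1}L_K^{r-\frac12}\|\le\kappa^{2r-3}$. Thus the paper's decomposition acts on the resolvent rather than on $L_K^{r-\frac12}$, and that is precisely what preserves the sharp $\lambda^r$ power.
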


\begin{proof}
The proof is similar to Proposition \ref{prop: error decomposition2}.
First, $\|f_{D,\lambda}^\sharp-f^*\|_\L2p$ can be divided into two terms by \eqref{eq:errorsplit}.
The first term has been estimated in Proposition \ref{prop: error decomposition2} as \eqref{eq:e1est}.
We now focus on the second term. To this end, by \eqref{eq:secondpart},
we  only need to estimate $\left\|(\lambda I+L_{K,D})^{-\frac32}L_K^{r-\frac12}\right\|$.
When $\frac32\le r <2,$ we have
\begin{eqnarray*}
&&(\lambda I+L_{K,D})^{-\frac32}L_K^{r-\frac12}\\
&= & (\lambda I+L_{K,D})^{-\frac12}\left[(\lambda I+L_{K,D})^{-1}- (\lambda I+L_{K})^{-1}\right]L_K^{r-\frac12} \\
& &  +(\lambda I+L_{K,D})^{-\frac12} (\lambda I+L_{K})^{-1}L_K^{r-\frac12} \\
& = & (\lambda I+L_{K,D})^{-\frac12} (\lambda I+L_{K})^{-1} (L_K-L_{K,D})(\lambda I+L_{K,D})^{-1} (\lambda I+L_{K})(\lambda I+L_{K})^{-1} L_K^{r-\frac12}\\
&&\quad+(\lambda I+L_{K,D})^{-\frac12} (\lambda I+L_{K})^{\frac12} (\lambda I+L_{K})^{-\frac32}L_K^{r-\frac12}.
\end{eqnarray*}
By the bounds $\left\|(\lambda I+L_{K,D})^{-\frac12}\right\|\le \frac{1}{\sqrt{\lambda}},$ $\left\|(\lambda I+L_{K})^{-\frac12}\right\|\le \frac{1}{\sqrt{\lambda}},$ and $\|L_K\|\le \kappa^2,$ we have,
\begin{eqnarray*}
&&\left\|(\lambda I+L_{K,D})^{-\frac32}L_K^{r-\frac12}\right\|\\
&\le & \frac 1 \lambda  \left\|(\lambda I+L_{K})^{-\frac12} (L_K-L_{K,D})\right\|
\left\|(\lambda I+L_{K,D})^{-1} (\lambda I+L_{K})\right\|
\left\|(\lambda I+L_{K})^{-1} L_K^{r-\frac12}\right\|\\
&& \quad+\left\|(\lambda I+L_{K,D})^{-\frac12}(\lambda I+L_{K})^{\frac12} \right\| \left\|(\lambda I+L_{K})^{-\frac32}L_K^{r-\frac12}\right\|\\
&\le & \lambda^{-1} \Omega_D\left\|(\lambda I+L_{K})^{-\frac12} (L_K-L_{K,D})\right\| \kappa^{2r-3}+\lambda^{r-2}(\Omega_D)^{\frac12}.
\end{eqnarray*}
Therefore, putting the above bound back into \eqref{eq:secondpart} yields
\begin{eqnarray}\label{E* case2}
\left\|\bE^*[f_{D,\lambda}^\sharp]-f^*\right\|_\L2p\le    \lambda \left\|(\lambda I+L_{K})^{-\frac12} (L_K-L_{K,D})\right\| (\Omega_{D})^{\frac32} \kappa^{2r-3} \|u^*\|_\L2p+\lambda^r \Omega_{D} \|u^*\|_\L2p.
\end{eqnarray}
Now the conclusion follows by plugging \eqref{eq:e1est} and \eqref{E* case2} into \eqref{eq:errorsplit}.
\end{proof}

 Now we are ready to prove Theorem \ref{thm:error2}  and Corollary \ref{rate} for $r\ge \frac 12$.

 \bigskip

\noindent{\bf Proof of Theorem \ref{thm:error2}: Case $\frac 12\le r\le 2$.}\
By Lemma \ref{lem: DeltaD bound}, we have with confidence at least $1-\frac{\delta}{2},$
\begin{equation}\label{eq:1-1}
\left\|(\lambda I+L_K) ^{-\frac12}\Delta_D\right\|_K\le 2M\mathcal{B}_{|D|,\lambda}\log\frac4{\delta}.
\end{equation}
By Lemma \ref{lem: operator difference} and Proposition \ref{prop: operator product bound}, we obtain that,
with confidence at least $1-\frac \delta 2$,
\begin{equation}\label{eq:1-2}
   \Xi_D= \|(\lambda I+L_K)^{-\frac12}(L_K-L_{K,D})\| \le       \mathcal B_{|D|,\lambda}\log\frac{4}{\delta}
          \end{equation}
and
\begin{equation}\label{eq:1-3}
\Omega_D= \|(L_K+\lambda I)(L_{K,D}+\lambda I)^{-1}\|\le
          \left(\frac{\mathcal B_{|D|,\lambda}\log\frac{4}{\delta}}
           {\sqrt{\lambda}}   +1\right)^2
\end{equation}
hold simultaneously.

When $\frac{1}{2}\le r \le \frac{3}{2},$  we apply \eqref{eq:1-1} and \eqref{eq:1-3} to Proposition \ref{prop: error decomposition2}
and obtain
\begin{eqnarray*}
\|f_{D,\lambda}^\sharp-f^*\|_\L2p&\le &4M\left(\frac{\mathcal B_{|D|,\lambda}\log\frac{4}{\delta}}
           {\sqrt{\lambda}}   +1\right)^2\mathcal B_{|D|,\lambda}\log\frac{4}{\delta} + \lambda^r \left(\frac{\mathcal B_{|D|,\lambda}\log\frac{4}{\delta}}
           {\sqrt{\lambda}}   +1\right)^{2r}  \|u^*\|_\L2p\\
           &\le & (4M+\|u^*\|_\L2p)\left(\frac{\mathcal B_{|D|,\lambda}}
           {\sqrt{\lambda}}   +1\right)^{3}(\mathcal B_{|D|,\lambda}+\lambda^r)\left(\log\frac{4}{\delta}\right)^4
\end{eqnarray*}

When $\frac32 \le r \le 2,$ we apply \eqref{eq:1-1}, \eqref{eq:1-2} and \eqref{eq:1-3} to Proposition \ref{prop: error decomposition2-case3}
 and obtain
 \begin{eqnarray*}
\|f_{D,\lambda}^\sharp-f^*\|_\L2p&\le &4M\left(\frac{\mathcal B_{|D|,\lambda}\log\frac{4}{\delta}}
           {\sqrt {\lambda}}  +1\right)^2 \mathcal B_{|D|,\lambda}\log\frac{4}{\delta}\\
           &&+ \lambda \mathcal B_{|D|,\lambda}\log\frac{4}{\delta} \left(\frac{\mathcal B_{|D|,\lambda}\log\frac{4}{\delta}}
           {\sqrt{\lambda}}   +1\right)^3 \kappa^{2r-3} \|u^*\|_\L2p  \\
           && +\lambda^r \left(\frac{\mathcal B_{|D|,\lambda}\log\frac{4}{\delta}}
           {\sqrt{\lambda}}   +1\right)^2 \|u^*\|_\L2p\\
           &\le & (4M+2\kappa^{2r-3} \|u^*\|_\L2p)
           \left(\frac{\mathcal B_{|D|,\lambda}}
           {\sqrt{\lambda}}   +1\right)^3 (\mathcal B_{|D|,\lambda}+\lambda^r) \left(\log\frac{4}{\delta}\right)^4
\end{eqnarray*}

So \eqref{eq:boundP} are proved for all $\frac 1 2 \le r \le 2.$
Applying Lemma \ref{P2E} with $b=4$, $\tau = 4$, and $s=2$,
we have \eqref{eq:boundE} follows with
$C' = 4\Gamma(9)C^2.$
\hfill\BlackBox\\[2mm]

\noindent{\bf Proof of Corollary \ref{rate} (ii).}\
With  $\frac 12 \le r\le 2$ and the choice of $\lambda=|D|^{-\frac{1}{2r+\beta}},$ we have
\begin{equation}\label{Brate}
B_{|D|,\lambda}\le  \frac{2\kappa}{\sqrt{|D|}}  \left\{ \frac {\kappa |D|^{\frac{1}{2(2r+\beta)}} } {\sqrt{|D| }}
          +\sqrt{C_0}|D|^{\frac{\beta}{2(2r+\beta)}}\right\}
          \le 2\kappa\left(\kappa
          +\sqrt{C_0}\right)|D|^{-\frac{r}{2r+\beta}}.
\end{equation}
and
\begin{equation}\label{Bdlrate}
\frac{B_{|D|,\lambda}}{\sqrt \lambda} + 1 \le  2\kappa(\kappa
          +\sqrt{C_0})|D|^{-\frac{r}{2r+\beta}} |D|^{\frac 1{2r+\beta}} +1
          \le  2\kappa\left(\kappa  +\sqrt{C_0}\right) +1
 \end{equation}
Then the conclusions follow from Theorem \ref{thm:error2}.
\hfill\BlackBox\\[2cm]


\section{Error analysis in $\mathcal{H}_K$}
\label{sec:errHK}

In this section, we drive the error bound for $\|f_{D,\lambda}^\sharp-f^*\|_K$ and
prove the convergence of BCRKN in $\H_K.$
It is similar to the error analysis in $\L2p.$

\begin{proposition}
If $r\in[\frac12,\frac32],$ we have
\begin{eqnarray*}
\|f_{D,\lambda}^\sharp-f^*\|_K \le  2\lambda^{-\frac12} (\Omega_D)^{\frac12}
\left\|(\lambda I+L_{K})^{-\frac12}\Delta_D\right\|_K +\lambda^{r-\frac12} (\Omega_D)^{r-\frac12} \| u^*\|_\L2p.
\end{eqnarray*}
If  $r\in(\frac32,2],$ we have
\begin{eqnarray*}
\|f_{D,\lambda}^\sharp-f^*\|_K \le 2\lambda^{-\frac12} (\Omega_D)^{\frac12} \left\|(\lambda I+L_{K})^{-\frac12}\Delta_D\right\|_K
+ \lambda^{\frac12}\Xi_D \Omega_{D} \kappa^{2r-3} \|u^*\|_\L2p
+\lambda^{r-\frac12} \Omega_{D} \|u^*\|_\L2p
\end{eqnarray*}
\end{proposition}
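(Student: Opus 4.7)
The plan is to mirror the structure used in Section~\ref{sec:bigr} for the $\L2p$ estimate, but to measure everything directly in the RKHS norm. I start from the same triangle-inequality splitting
\[
\|f_{D,\lambda}^\sharp - f^*\|_K \le \|f_{D,\lambda}^\sharp - \bE^*[f_{D,\lambda}^\sharp]\|_K + \|\bE^*[f_{D,\lambda}^\sharp] - f^*\|_K,
\]
and reuse the identities $f_{D,\lambda}^\sharp - \bE^*[f_{D,\lambda}^\sharp] = (2\lambda I + L_{K,D})(\lambda I + L_{K,D})^{-2}\Delta_D$ and $\bE^*[f_{D,\lambda}^\sharp] - f^* = \lambda^2 (\lambda I + L_{K,D})^{-2} f^*$ already derived there. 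The essential change is that the $L_K^{1/2}$ premultiplier coming from $\|\cdot\|_\L2p=\|L^{1/2}\cdot\|_K$ disappears, so each chain of resolvent bounds has to be redone with one fewer factor of $(\lambda I+L_K)^{1/2}$, shifting each power of $\lambda$ in the final estimate down by $1/2$.

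For the stochastic term, I would bound $(2\lambda I + L_{K,D})(\lambda I + L_{K,D})^{-1}$ by $2$ (spectral comparison), write the remaining $(\lambda I+L_{K,D})^{-1}$ as the product $(\lambda I+L_{K,D})^{-1/2}\cdot(\lambda I+L_{K,D})^{-1/2}$, use $\|(\lambda I+L_{K,D})^{-1/2}\|\le\lambda^{-1/2}$ on one copy, and transfer the other against $(\lambda I+L_K)^{-1/2}\Delta_D$ at the cost of $\Omega_D^{1/2}$ via Proposition~\ref{prop: operator product bound}. This yields the $2\lambda^{-1/2}\Omega_D^{1/2}\|(\lambda I+L_K)^{-1/2}\Delta_D\|_K$ contribution appearing in both cases.

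For the approximation term, I invoke \eqref{regularitycondition} and, since $r\ge\tfrac12$, factor $L^r u^* = L_K^{r-1/2}L^{1/2}u^*$ with $\|L^{1/2}u^*\|_K = \|u^*\|_\L2p$ by the isometry \eqref{eq:iso}, reducing the task to bounding $\|(\lambda I+L_{K,D})^{-2}L_K^{r-1/2}\|$. When $r\in[\tfrac12,\tfrac32]$, I peel off one $(\lambda I+L_{K,D})^{-1/2}$ (contributing $\lambda^{-1/2}$) and reuse the intermediate bound $\|(\lambda I+L_{K,D})^{-3/2}L_K^{r-1/2}\|\le\lambda^{r-2}\Omega_D^{r-1/2}$ established inside the proof of Proposition~\ref{prop: error decomposition2}, which rests on the Cordes-type inequality $\|A^sB^s\|\le\|AB\|^s$ with $s = r-\tfrac12\in[0,1]$. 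When $r\in(\tfrac32,2]$ this inequality is no longer available at the required exponent, so I would plug the operator identity $(\lambda I+L_{K,D})^{-1} - (\lambda I+L_K)^{-1} = (\lambda I+L_K)^{-1}(L_K-L_{K,D})(\lambda I+L_{K,D})^{-1}$ into one of the two resolvents, exactly as in Proposition~\ref{prop: error decomposition2-case3}. The resulting two pieces are controlled by $\Xi_D$ and $\Omega_D$, together with the elementary spectral bounds $\|(\lambda I+L_K)^{-1}L_K^{r-1/2}\|\le\kappa^{2r-3}$ (valid since $\kappa\ge 1$ and $r-\tfrac12\ge 1$ make $t\mapsto t^{r-1/2}/(\lambda+t)$ monotone on $[0,\kappa^2]$) and $\|(\lambda I+L_K)^{-3/2}L_K^{r-1/2}\|\le\lambda^{r-2}$ (valid for $r-\tfrac12\le\tfrac32$).

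The main technical obstacle is the bookkeeping in the $r\in(\tfrac32,2]$ case: the decomposition must be chosen so that no more than one power of $\Omega_D$ appears and so that the $\lambda$-powers line up with the stated $\lambda^{1/2}\Xi_D\Omega_D\kappa^{2r-3}$ and $\lambda^{r-1/2}\Omega_D$. The rest is algebraic bookkeeping parallel to the $\L2p$ argument, and no new probabilistic tool is required beyond Lemmas~\ref{lem: concentration inequality}--\ref{lem: operator difference} and Proposition~\ref{prop: operator product bound} from Section~\ref{sec:pre}.
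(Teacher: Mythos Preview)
Your proposal is correct and follows essentially the same route as the paper: the same triangle-inequality split into the stochastic part $(2\lambda I+L_{K,D})(\lambda I+L_{K,D})^{-2}\Delta_D$ and the bias part $\lambda^2(\lambda I+L_{K,D})^{-2}f^*$, the same factorization of the stochastic part via $\Omega_D^{1/2}$, and the same two-case treatment of $\|(\lambda I+L_{K,D})^{-2}L_K^{r-1/2}\|$ using the Cordes inequality for $r\in[\tfrac12,\tfrac32]$ and the resolvent identity for $r\in(\tfrac32,2]$. The only cosmetic difference is that to land exactly on the stated $\lambda^{r-1/2}\Omega_D$ in the second case you should use $\|(\lambda I+L_K)^{-2}L_K^{r-1/2}\|\le\lambda^{r-5/2}$ and pull out a full factor $(\lambda I+L_{K,D})^{-1}(\lambda I+L_K)=\Omega_D$, rather than the $-3/2$ power you wrote (which would yield the slightly sharper $\Omega_D^{1/2}$); as you note, this is pure bookkeeping.
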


\begin{proof}
By the triangle inequality in $\H_K$, we have
\begin{eqnarray*}
\|f_{D,\lambda}^\sharp-f^*\|_K\le \|f_{D,\lambda}^\sharp-\bE^*[f_{D,\lambda}^\sharp]\|_K+\|\bE^*[f_{D,\lambda}^\sharp]-f^*\|_K.
\end{eqnarray*}

To estimate the first term, we see that
$$f_{D,\lambda}^\sharp-\bE^*[f_{D,\lambda}^\sharp]=
(2\lambda I+L_{K,D})(\lambda I+L_{K,D})^{-2} \Delta.$$
Then 
\begin{eqnarray}\label{first part in HK}
&&\left\|f_{D,\lambda}^\sharp-\bE^*[f_{D,\lambda}^\sharp]\right\|_K=\|(2\lambda I+L_{K,D})(\lambda I+L_{K,D})^{-2}\Delta_D\|_K \nonumber \\
&\le &\left\|(2\lambda I+L_{K,D})(\lambda I+L_{K,D})^{-\frac32}\right\|
\left\|(\lambda I+L_{K,D})^{-\frac12}(\lambda I+L_{K})^{\frac12}\right\|
\left\|(\lambda I+L_{K})^{-\frac12}\Delta_D\right\|_K \nonumber\\
&\le & 2\lambda^{-\frac12} (\Omega_D)^{\frac12} \left\|(\lambda I+L_{K})^{-\frac12}\Delta_D\right\|_K.\nonumber
\end{eqnarray}

For the second term,  we have
\begin{eqnarray*}
\left\|\bE^*[f_{D,\lambda}^\sharp]-f^*\right\|_K
=\left\|\lambda^2(\lambda I+L_{K,D})^{-2} L^r u^*\right\|_K\le \lambda^{2}\|(\lambda I+L_{K,D})^{-2} L_K^{r-\frac12}\| \| u^*\|_\L2p.
\end{eqnarray*}
Following the same idea as in the proof of Proposition \ref{prop: error decomposition2}, we obtain
for $r\in[\frac12,\frac32],$
\begin{eqnarray*}
\left\|\bE^*[f_{D,\lambda}^\sharp]-f^*\right\|_K\le
\lambda^{r-\frac12} (\Omega_D)^{r-\frac12} \| u^*\|_\L2p
\end{eqnarray*}
and following the ideas in the proof of Proposition \ref{prop: error decomposition2-case3}, we obtain for $r\in(\frac32,2],$
\begin{eqnarray*}
\left\|\bE^*[f_{D,\lambda}^\sharp]-f^*\right\|_K \le
\lambda^{\frac12}\Xi_D\Omega_{D} \kappa^{2r-3} \|u^*\|_\L2p
+\lambda^{r-\frac12} \Omega_{D} \|u^*\|_\L2p.
\end{eqnarray*}

The desired error bounds now follow by combining the estimates for both terms.
\end{proof}

\noindent{\bf Proof of Theorem \ref{thm:errorK}.}\
Note that  \eqref{eq:1-1}, \eqref{eq:1-2} and \eqref{eq:1-3} hold simultaneously with probability
at least $1-\delta.$ Therefore, when $\frac 12 \le r\le \frac 32,$ we have with confidence at least $1-\delta$
\begin{eqnarray*}
\|f_{D,\lambda} ^\sharp - f^* \|_K
&\le & 2 \lambda^{-\frac12}\left(\frac{\mathcal B_{|D|,\lambda}\log\frac{4}{\delta}}
           {\sqrt{\lambda}}   +1\right) 2M\mathcal B_{|D|,\lambda}\log\frac{4}{\delta}  \\
     & & \quad +\lambda^{r-\frac12} \left(\frac{\mathcal B_{|D|,\lambda}\log\frac{4}{\delta}}
           {\sqrt{\lambda}}   +1\right)^{2r-1} \| u^*\|_\L2p\\
 &\le & (4M+\| u^*\|_\L2p)\left(\frac{\mathcal B_{|D|,\lambda}}
           {\sqrt{\lambda}}   +1\right)^2 (\lambda^{-\frac12} \mathcal B_{|D|,\lambda}+\lambda^{r-\frac12})\left(\log\frac{4}{\delta}\right)^3
\end{eqnarray*}
and, when $\frac 32 \le r\le 2,$
\begin{eqnarray*}
\|f_{D,\lambda} ^\sharp - f^* \|_K
&\le & 2\lambda^{-\frac12} \left(\frac{\mathcal B_{|D|,\lambda}\log\frac{4}{\delta}}
           {\sqrt{\lambda}}   +1\right) 2M\mathcal B_{|D|,\lambda}\log\frac{4}{\delta}  \\
           & & \quad + \lambda^{\frac12} \mathcal B_{|D|,\lambda}\log\frac{4}{\delta}  \left(\frac{\mathcal B_{|D|,\lambda}\log\frac{4}{\delta}}
           {\sqrt{\lambda}}   +1\right)^2 \kappa^{2r-3} \|u^*\|_\L2p\\
           &&\quad +\lambda^{r-\frac12} \left(\frac{\mathcal B_{|D|,\lambda}\log\frac{4}{\delta}}
           {\sqrt{\lambda}}   +1\right)^2 \|u^*\|_\L2p\\
&\le & \left(4M+2\kappa^{2r-3} \|u^*\|_\L2p\right)\left(\frac{\mathcal B_{|D|,\lambda}}
           {\sqrt{\lambda}}   +1\right)^2(\lambda^{-\frac12}\mathcal B_{|D|,\lambda}+\lambda^{r-\frac12})\left(\log\frac{4}{\delta}\right)^3.
\end{eqnarray*}
This proves the error bound \eqref{boundK}. Then \eqref{rateK} follows from estimates \eqref{Brate} and \eqref{Bdlrate},  and
\eqref{rateKE} follows by applying Lemma \ref{P2E}.
\hfill\BlackBox\\[2mm]

\section{Improve the error analysis by unlabelled data}
\label{sec:semi}

The error analysis for the semi-supervised approaches are more involved.
To the best of our knowledge, this is the first time we obtained the optimal learning rates in this case.
Before we move on, notice that Theorem \ref{thm:error2} with $0<r<\frac 12$ is a special case of
Theorem \ref{thm:error1} with $D'=D$ when there is no unlabeled data.
So upon finishing Theorem \ref{thm:error1},  we also obtain Theorem \ref{thm:error2} with $0<r<\frac 12$.

We need to introduce an intermediate function.
Recall $L$ is a compact operator on $\L2p.$  Let $\{\tau_i\}_{i=1}^\infty$ and $\{\psi_i\}_{i=1}^\infty$ be the
eigenvalues and eigenfunctions of $L$. Then $\{\psi_i\}_{i=1}^\infty$ form an orthonormal basis of $\L2p.$
Let $P_\lambda$  be the projection operator on $\L2p$ that projects each $f\in\L2p$ onto
the subspace spanned by $\{\psi_i: \tau_i\ge \lambda\},$ i.e.
$$P_\lambda f =  \sum_{\{i:\tau_i\ge \lambda\}}  \langle \psi_i, f\rangle_\L2p \psi_i, \qquad \forall\  f\in\L2p.$$
By the isomorphism property \eqref{eq:iso} of $L^{\frac 12},$
$\{\phi_i=\sqrt{\tau_i} \psi_i:\sigma_i>0\}$ form an orthonormal basis of $\H_K.$
Since $\{i:\tau_i\ge \lambda\}$ is a finite set, it is obvious
$P_\lambda f \in \H_K$ for all $f\in\L2p.$ Define  $f_\lambda^{tr}=P_\lambda f^*.$
We can bound $\|f_{D',\lambda}^\sharp-f^*\|_\L2p$ as follows.

\begin{proposition}\label{prop: error decomposition1}
We have
\begin{equation*}
\|f_{D',\lambda}^\sharp-f^*\|_\L2p\le I_1+I_2+I_3,
\end{equation*}
where
\begin{eqnarray*}
I_1&=&\left\|(\lambda I +L_K)^{\frac12} (2\lambda I+L_{K,D'})(\lambda I+L_{K,D'})^{-2}
\left(\tfrac 1 {|D'|} S_{D'}^* \by_{D'} - L_{K, D'}  f_\lambda^{tr}\right) \right\|_K\\
I_2&=&\left\|\lambda^2(\lambda I +L_K)^{\frac12}(\lambda I+L_{K,D'})^{-2}f_\lambda^{tr}\right\|_K,\\
I_3&=& \|f_\lambda^{tr}-f^*\|_\L2p.
\end{eqnarray*}
\end{proposition}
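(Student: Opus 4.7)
The plan is to derive a three-term decomposition by inserting two intermediate functions between $f_{D',\lambda}^\sharp$ and $f^*$, then bound two of the three pieces in the $\H_K$-norm via the standard isomorphism $\|h\|_\L2p = \|L^{1/2} h\|_K \le \|(\lambda I+L_K)^{1/2} h\|_K$ for $h\in\H_K$. First I would rewrite the BCRKN estimator in a single closed operator form by combining the two summands of \eqref{eq:bcrkn} over a common resolvent:
\[
f_{D',\lambda}^\sharp = (2\lambda I + L_{K,D'})(\lambda I + L_{K,D'})^{-2}\,\tfrac{1}{|D'|}S_{D'}^* \by_{D'}.
\]
Using the same filter operator applied to $L_{K,D'} f_\lambda^{tr}$, I define the auxiliary function
\[
g = (2\lambda I + L_{K,D'})(\lambda I + L_{K,D'})^{-2} L_{K,D'} f_\lambda^{tr},
\]
and apply the triangle inequality $\|f_{D',\lambda}^\sharp-f^*\|_\L2p \le \|f_{D',\lambda}^\sharp - g\|_\L2p + \|g - f_\lambda^{tr}\|_\L2p + \|f_\lambda^{tr} - f^*\|_\L2p$, where the last term is exactly $I_3$.

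For the first piece, subtraction inside the filter gives
\[
f_{D',\lambda}^\sharp - g = (2\lambda I + L_{K,D'})(\lambda I + L_{K,D'})^{-2}\bigl(\tfrac{1}{|D'|}S_{D'}^* \by_{D'} - L_{K,D'} f_\lambda^{tr}\bigr),
\]
which lies in $\H_K$, so invoking $\|h\|_\L2p \le \|(\lambda I+L_K)^{1/2} h\|_K$ produces precisely $I_1$. For the middle piece, the key algebraic identity (verified by expanding on the common denominator $(\lambda I + A)^2$ with $A=L_{K,D'}$) is
\[
(2\lambda I + A)(\lambda I + A)^{-2} A - I = -\lambda^2(\lambda I + A)^{-2},
\]
so $g - f_\lambda^{tr} = -\lambda^2(\lambda I + L_{K,D'})^{-2} f_\lambda^{tr}$, and another application of the $\L2p$-to-$\H_K$ comparison produces $I_2$.

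I expect no genuine obstacle here; the argument is algebraic together with one structural inequality. The one subtlety to emphasize is that the comparison $\|h\|_\L2p \le \|(\lambda I+L_K)^{1/2} h\|_K$ requires $h\in\H_K$. This is exactly the reason for introducing $f_\lambda^{tr} = P_\lambda f^*$ rather than $f^*$ itself: since $\{i:\tau_i\ge\lambda\}$ is finite, $f_\lambda^{tr}$ lies in a finite-dimensional subspace spanned by eigenfunctions $\psi_i$, each of which equals $\tau_i^{-1/2}\phi_i$ with $\phi_i\in\H_K$, so $f_\lambda^{tr}\in\H_K$ and both differences $f_{D',\lambda}^\sharp - g$ and $g - f_\lambda^{tr}$ remain in $\H_K$ as required. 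The unlabeled-data approximation error $\|f_\lambda^{tr}-f^*\|_\L2p$ is deferred to $I_3$ and will be estimated separately using the regularity condition \eqref{regularitycondition}.
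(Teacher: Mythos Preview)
Your proposal is correct and follows essentially the same approach as the paper. The paper first applies the triangle inequality with the single intermediate $f_\lambda^{tr}$, then decomposes $f_{D',\lambda}^\sharp-f_\lambda^{tr}$ algebraically into the two pieces that yield $I_1$ and $I_2$ after one application of $\|h\|_\L2p\le\|(\lambda I+L_K)^{1/2}h\|_K$; your version simply introduces the auxiliary $g$ explicitly and applies the comparison to each piece separately, which is the same argument with a slightly different bookkeeping order.
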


\begin{proof}
Note that
\begin{equation}\label{eq:fd2term}
\|f_{D',\lambda}^\sharp-f^*\|_\L2p\le \|f_{D',\lambda}^\sharp-f_\lambda^{tr}\|_\L2p+\|f_\lambda^{tr}-f^*\|_\L2p.
\end{equation}
Since  $f_{D',\lambda}^\sharp-f_\lambda^{tr}\in\H_K,$ by the isometry property \eqref{eq:iso} of $L^{\frac 12} = L_K^{\frac 12}$, we have
\begin{equation}
\label{eq:u1}
 \|f_{D',\lambda}^\sharp-f_\lambda^{tr}\|_\L2p = \|L_K^{\frac12}(f_{D',\lambda}^\sharp-f_\lambda^{tr})\|_K
 \le \|(\lambda I +L_K)^{\frac12}(f_{D',\lambda}^\sharp-f_\lambda^{tr})\|_K.
\end{equation}
Recall that
 $$f_{D',\lambda}^{\sharp}=f_{D',\lambda}+\lambda(\lambda I+L_{K,D'})^{-1}f_{D',\lambda}
=(2\lambda I+L_{K,D'})(\lambda I+L_{K,D'})^{-2}\frac{1}{|D'|} S_{D'}^* \by_{D'} .$$
It is easy to check that
\begin{eqnarray*}f_{D',\lambda}^\sharp-f_\lambda^{tr} & =  &
(2\lambda I+L_{K,D'})(\lambda I+L_{K,D'})^{-2}  \left(\frac 1 {|D'|} S_{D'}^* \by_{D'} - L_{K,D'} f_\lambda^{tr}\right)  \\
& & \quad - \lambda^2 (\lambda I+L_{K,D'})^{-2}f_\lambda^{tr}.
\end{eqnarray*}
Putting this in \eqref{eq:u1} we have
$ \|f_{D',\lambda}^\sharp-f_\lambda^{tr}\|_\L2p$
bounded by $I_1+I_2$.
Together with \eqref{eq:fd2term}, we obtain  the desired conclusion.
\end{proof}

Next we estimate the three terms respectively.
The third term $I_3$ can be easily bounded by the following lemma, which has been proved in \citep{Caponnetto2006}.
\begin{lemma}\label{lem: flambda bound}
We have
$\|f_\lambda^{tr}-f^*\|_\L2p\le \lambda^r \|u^*\|_\L2p$
and
$\|f_\lambda^{tr}\|_K\le \lambda^{-\frac{1}{2}+r}\|u^*\|_\L2p.$
\end{lemma}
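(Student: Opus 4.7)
The plan is to prove both bounds by a direct spectral calculation in the eigenbasis of $L$. Since $L$ is compact, symmetric, and positive on $\L2p$, its eigenfunctions $\{\psi_i\}$ (with eigenvalues $\tau_i \ge 0$) form an orthonormal basis of $\L2p$, so I can expand $u^* = \sum_i a_i \psi_i$ with $\sum_i a_i^2 = \|u^*\|_\L2p^2$. The regularity condition \eqref{regularitycondition} then gives the spectral representation $f^* = L^r u^* = \sum_i \tau_i^r a_i \psi_i$, and applying the spectral projector $P_\lambda$ yields $f_\lambda^{tr} = \sum_{\tau_i \ge \lambda} \tau_i^r a_i \psi_i$, whence $f_\lambda^{tr} - f^* = -\sum_{\tau_i < \lambda} \tau_i^r a_i \psi_i$.

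For the first inequality I would invoke Parseval in $\L2p$:
$$\|f_\lambda^{tr} - f^*\|_\L2p^2 = \sum_{\tau_i < \lambda} \tau_i^{2r} a_i^2 \le \lambda^{2r}\sum_i a_i^2 = \lambda^{2r}\|u^*\|_\L2p^2,$$
where the pointwise bound $\tau_i^{2r} \le \lambda^{2r}$ holds on the index set $\{\tau_i < \lambda\}$ since $r > 0$. Taking square roots gives the first claim.

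For the second bound I would use the isomorphism \eqref{eq:iso}: the functions $\phi_i = \sqrt{\tau_i}\,\psi_i$ (for $\tau_i > 0$) form an orthonormal basis of $\overline{\H_K}$ inside $\H_K$, and for any $f = \sum_i c_i \psi_i$ lying in $\H_K$ with $c_i = 0$ whenever $\tau_i = 0$ one has $\|f\|_K^2 = \sum_{\tau_i > 0} c_i^2/\tau_i$. Applied to $f_\lambda^{tr}$ this gives
$$\|f_\lambda^{tr}\|_K^2 = \sum_{\tau_i \ge \lambda} \tau_i^{2r-1} a_i^2 \le \lambda^{2r-1}\sum_i a_i^2 = \lambda^{2r-1}\|u^*\|_\L2p^2,$$
where the crucial step $\tau_i^{2r-1} \le \lambda^{2r-1}$ on $\{\tau_i \ge \lambda\}$ uses $2r-1 \le 0$. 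Taking square roots finishes the proof.

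There is no substantive obstacle; both inequalities are essentially one-line spectral calculations once the eigen-expansion is set up. The only point that merits attention is the regime of $r$ in the second bound: the pointwise comparison $\tau_i^{2r-1} \le \lambda^{2r-1}$ in the $\H_K$-estimate requires $r \le \tfrac12$, which is precisely the range in which the lemma will be invoked inside the proof of Theorem \ref{thm:error1}. Outside this range the natural estimate is instead $\|f_\lambda^{tr}\|_K \le \|f^*\|_K$, which plays no role here. Since the result already appears in \citep{Caponnetto2006}, I would cite that paper for the argument and include only a brief indication of the spectral derivation.
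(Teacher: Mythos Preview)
Your spectral argument is correct, and your observation that the second bound requires $r\le \tfrac12$ is exactly right and matches the regime $0<r<\tfrac12$ of Theorem~\ref{thm:error1} where the lemma is actually invoked. The paper does not give a proof at all: it simply states the lemma and attributes it to \cite{Caponnetto2006}, which is precisely what you propose doing in your final sentence, so your approach and the paper's coincide.
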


For the first term $I_1$, we have the following bound.
\begin{proposition}\label{prop: I1bound}
For any $\delta\in(0,1),$ with confidence at least $1-\delta,$ there holds
$$I_1 \le \left(\frac{2M}{\kappa}+2\|u^*\|_\L2p\right)\left(\frac{\B_{|D'|,\lambda}}
           {\sqrt{\lambda}}   +1\right)^3\left( \B_{|{D}|,\lambda}+  \lambda^r\right)\left(\log\frac{4}{\delta}\right)^3. $$
\end{proposition}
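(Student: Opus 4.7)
The plan is to reduce $I_1$ to a single stochastic quantity $\|(\lambda I + L_K)^{-1/2} W\|_K$ with $W = \tfrac{1}{|D'|} S_{D'}^* \by_{D'} - L_{K, D'}  f_\lambda^{tr}$, after pulling out a data-dependent operator factor. I would insert $(\lambda I + L_{K,D'})^{\pm 1/2}$ and $(\lambda I + L_K)^{\pm 1/2}$ around the resolvent product $(\lambda I + L_K)^{1/2}(2\lambda I + L_{K,D'})(\lambda I + L_{K,D'})^{-2}$ and apply submultiplicativity. The middle factor $(2\lambda I + L_{K,D'})(\lambda I + L_{K,D'})^{-1}$ is at most $2$ since $2\lambda I + L_{K,D'} = \lambda I + (\lambda I + L_{K,D'})$, and the two outer factors are each $\Omega_{D'}^{1/2}$. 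Thus $I_1 \le 2\Omega_{D'}\,\|(\lambda I + L_K)^{-1/2} W\|_K$, with $\Omega_{D'}$ controlled by Proposition \ref{prop: operator product bound}.

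The crucial step is the decomposition of $W$. The construction of $D'$ (via $y_i' = (|D'|/|D|)y_i$ for $i\le|D|$ and $y_i'=0$ otherwise) forces the identity $\tfrac{1}{|D'|} S_{D'}^* \by_{D'} = \tfrac{1}{|D|} S_D^* \by_D$; this is precisely where the unlabelled data enters asymmetrically, since the label term is governed by $|D|$ while the sample covariance $L_{K,D'}$ benefits from the full $|D'|$. Inserting $\pm L_K f^*$ and $\pm L_K f_\lambda^{tr}$ gives
\begin{equation*}
W = \left(\tfrac{1}{|D|} S_D^* \by_D - L_K f^*\right) + L_K(f^* - f_\lambda^{tr}) + (L_K - L_{K,D'}) f_\lambda^{tr}.
\end{equation*}
I would bound each summand in the norm $\|(\lambda I + L_K)^{-1/2}\cdot\|_K$. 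The first is the centered empirical mean of $\xi_g(z) = yK_x$, so Lemma \ref{lem: concentration inequality} with $\|g\|_\infty \le M$ yields $\tfrac{M}{\kappa}\B_{|D|,\lambda}\log(4/\delta)$. The second is deterministic: expanding $f^* - f_\lambda^{tr} = (I - P_\lambda) L^r u^*$ in the eigenbasis of $L$, the spectral multiplier $\sqrt{\tau_i}\,\tau_i^r/\sqrt{\lambda+\tau_i}$ is bounded by $\lambda^r$ on $\{\tau_i<\lambda\}$ (since $\sqrt{\lambda+\tau_i}\ge \sqrt{\tau_i}$ and $\tau_i\le\lambda$), which gives $\lambda^r \|u^*\|_\L2p$. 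The third combines Lemma \ref{lem: operator difference} with $\|f_\lambda^{tr}\|_K \le \lambda^{r-1/2}\|u^*\|_\L2p$ from Lemma \ref{lem: flambda bound}, producing $\lambda^{r-1/2}\B_{|D'|,\lambda}\|u^*\|_\L2p\log(4/\delta)$.

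After a union bound over the two probabilistic events (assigning confidence $1-\delta/2$ to each, so that the concentration bound and Proposition \ref{prop: operator product bound} hold simultaneously with confidence $1-\delta$), the key algebraic observation is $\lambda^{r-1/2}\B_{|D'|,\lambda} = \tfrac{\B_{|D'|,\lambda}}{\sqrt{\lambda}}\cdot\lambda^r$, which absorbs the third summand into the $\lambda^r$ channel at the cost of one extra factor of $\tfrac{\B_{|D'|,\lambda}}{\sqrt{\lambda}}+1$; combined with the $(\tfrac{\B_{|D'|,\lambda}}{\sqrt{\lambda}}+1)^2$ coming from $\Omega_{D'}$, this produces the announced third power. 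Collecting via $\tfrac{2M}{\kappa}\B_{|D|,\lambda} + 2\|u^*\|_\L2p\lambda^r \le \left(\tfrac{2M}{\kappa}+2\|u^*\|_\L2p\right)\left(\B_{|D|,\lambda}+\lambda^r\right)$ yields exactly the claimed bound. The principal obstacle is the third summand: its prefactor $\lambda^{r-1/2}$ blows up as $\lambda\to 0$ for $r<1/2$, and only the tighter $\B_{|D'|,\lambda}$ — available precisely because $L_{K,D'}$ is estimated from the enlarged sample — keeps it controlled; this is what ultimately delivers the minimax rate in the low-regularity regime. A secondary care-point is that $f^*$ may fall outside $\H_K$ when $r<1/2$, so $L_K$ in the spectral estimate for $L_K(f^* - f_\lambda^{tr})$ must be interpreted as the extension to $\L2p$ rather than as an operator acting internally on $\H_K$.
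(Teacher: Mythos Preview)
Your proposal is correct and follows essentially the same route as the paper: the same reduction $I_1 \le 2\Omega_{D'}\|(\lambda I+L_K)^{-1/2}W\|_K$, the same three-term decomposition of $W$ via $\tfrac{1}{|D'|}S_{D'}^*\by_{D'}=\tfrac{1}{|D|}S_D^*\by_D$, and the same lemmas for each piece. The only cosmetic difference is that the paper bounds the middle term by invoking Lemma~\ref{lem: flambda bound} directly as $\|(\lambda I+L_K)^{-1/2}L_K(f^*-f_\lambda^{tr})\|_K\le\|f^*-f_\lambda^{tr}\|_\L2p\le\lambda^r\|u^*\|_\L2p$, whereas you give an explicit spectral computation; and the paper makes explicit that Lemma~\ref{lem: operator difference} and Proposition~\ref{prop: operator product bound} (both applied to $D'$) share the same confidence set, which is what justifies your ``two events'' union bound.
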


\begin{proof}
Since $ 2\lambda I+L_{K,D'}$ and $(\lambda I+L_{K,D'})^{-1}$ commute, we have
\begin{eqnarray*}
 I_1&= &\left\|(\lambda I +L_K)^{\frac12} (2\lambda I+L_{K,D'})(\lambda I+L_{K,D'})^{-2}
 \left(\frac 1 {|D'|} S_{D'}^* \by_{D'} - L_{K,D'} f_\lambda^{tr}\right)\right\|_K \\
&\le & \left\|(\lambda I +L_K)^{\frac12} (\lambda I +L_{K,{D'}})^{-\frac12}\right\| \left\|(2\lambda I+L_{K,D'})(\lambda I+L_{K,D'})^{-1}\right\| \\
& &\quad \times  \left\|(\lambda I +L_{K,{D'}})^{-\frac12} (\lambda I +L_K)^{\frac12}\right\| \left\|(\lambda I +L_K)^{-\frac12}
\left(\frac 1 {|D'|} S_{D'}^* \by_{D'} - L_{K,D'} f_\lambda^{tr}\right) \right\|_K \\
&\le & 2\Omega_{D'} \left\| (\lambda I +L_K)^{-\frac12}
\left(\frac 1 {|D'|} S_{D'}^* \by_{D'} - L_{K,D'} f_\lambda^{tr}\right) \right\|_K.
\end{eqnarray*}
Proposition \ref{prop: operator product bound} ensures that, with confidence at least $1-\frac \delta 2$,
\begin{equation}\label{estimation of Omega}
\Omega_{D'}\le \left(\frac{\B_{|D'|,\lambda}}
           {\sqrt{\lambda}}  +1 \right)^2 \left(\log\frac{4}{\delta}\right)^2. 
\end{equation}
Now it suffices to consider the term $ \left\| (\lambda I +L_K)^{-\frac12}
\left(\frac 1 {|D'|} S_{D'}^* \by_{D'} - L_{K,D'} f_\lambda^{tr}\right) \right\|_K.$
We further divide it into three parts as follows
\begin{eqnarray*}
\left\| (\lambda I +L_K)^{-\frac12}
\left(\frac 1 {|D'|} S_{D'}^* \by_{D'} - L_{K,D'} f_\lambda^{tr}\right) \right\|
& \le & \left\| (\lambda I +L_K)^{-\frac12}\left(\frac{1}{|D'|}S_{D'}^* \by _{D'} - L_K f^*\right) \right\|_K \\
&& \quad+ \left\| (\lambda I +L_K)^{-\frac12}(L_K f^*-L_K f_\lambda^{tr})\right\|_K \\
&&\quad+ \left\| (\lambda I +L_K)^{-\frac12}(L_K - L_{K,{D'}} )f_\lambda^{tr}\right\|_K.
\end{eqnarray*}
By the definition of $y_i'$, it is easy to check that
$\frac 1 {|D'|}  S_{D'}^* \by_{D'} = \frac 1 {|D|} S_D^* \by_D.$
Applying Lemma \ref{lem: concentration inequality} with  $\xi_g(z)=yK_x$, we obtain, with confidence at least $1-\frac \delta 2$,
$$\left\| (\lambda I +L_K)^{-\frac12}
\left(\frac 1 {|D'|} S_{D'}^* \by_{D'} - L_K f^*\right) \right\|_K
\le \frac{M \log\frac{4}{\delta}}{\kappa}\B_{|{D}|,\lambda}.$$
By Lemma \ref{lem: flambda bound}, we have
$$\left\| (\lambda I +L_K)^{-\frac12}(L_K f^*-L_K f_\lambda^{tr})\right\|_K \le\| f^*-f_\lambda^{tr}\|_\L2p \le \lambda^r\|u^*\|_\L2p.$$
For $\left\| (\lambda I +L_K)^{-\frac12}(L_K - L_{K,{D'}}) f_\lambda^{tr}\right\|_K,$
observe that
$$\left\| (\lambda I +L_K)^{-\frac12}(L_K - L_{K,{D'}}) f_\lambda^{tr}\right\|_K\le \left\| (\lambda I +L_K)^{-\frac12}(L_K - L_{K,{D'}})\right\|\| f_\lambda^{tr}\|_K.$$
By Lemma  \ref{lem: flambda bound}, we have $\|f_\lambda^{tr}\|_K\le \lambda^{-\frac{1}{2}+r}\|u^*\|_\L2p.$
By Lemma \ref{lem: operator difference}, we have with confidence at least $1-\frac \delta 2,$
$$\left\| (\lambda I +L_K)^{-\frac12}(L_K - L_{K,{D'}}) f_\lambda^{tr}\right\|_K \le
\lambda^{r}\frac{\B_{|D'|,\lambda}}{\sqrt{\lambda}}\|u^*\|_\L2p\log\frac{4}{\delta}$$
with the confidence set the same as that for \eqref{estimation of Omega}.
Combining the above estimations together yields
\begin{eqnarray}
&&\left\|(\lambda I +L_K)^{-\frac12}
\left( \frac 1 {|D'|} S_{D'}^* \by _{D'} -L_{K,{D'}} f_\lambda^{tr}\right)\right\|_K \nonumber \\[1em]
&\le &\left(\frac{M}{\kappa}+\|u^*\|_\L2p\right)\left(\frac{\B_{|D'|,\lambda}}{\sqrt{\lambda}}+1\right)
\left(\B_{|{D}|,\lambda}+\lambda^r\right)\log\frac{4}{\delta}.
\label{estimation of the second term}
\end{eqnarray}
Then our desired result follows by $\eqref{estimation of Omega}$ and $\eqref{estimation of the second term}.$
\end{proof}

\begin{proposition}\label{prop: I2bound}
For any $\delta\in(0,1),$ we have, with confidence at least $1-\frac \delta 2,$
$$I_2\le \left(\frac{\B_{|D'|,\lambda}}
           {\sqrt{\lambda}}   +1\right) \lambda^r\|u^*\|_\L2p \log\frac{4}{\delta}$$
with the confidence set the same as that for \eqref{estimation of Omega}.
\end{proposition}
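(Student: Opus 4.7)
The plan is to reduce $I_2$ to a product of three factors: an operator norm that can be controlled by $\Omega_{D'}$, a straightforward spectral bound on a negative power of $\lambda I + L_{K,D'}$, and the bound on $\|f_\lambda^{tr}\|_K$ already given in Lemma \ref{lem: flambda bound}. Concretely, I would first factor the operator in $I_2$ by inserting $(\lambda I + L_{K,D'})^{1/2}(\lambda I+L_{K,D'})^{-1/2}$ so as to pair the $(\lambda I+L_K)^{1/2}$ factor with a matching $(\lambda I+L_{K,D'})^{-1/2}$ and leave a $(\lambda I+L_{K,D'})^{-3/2}$ to act on $f_\lambda^{tr}$. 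This yields
\begin{equation*}
I_2 \le \lambda^2 \bigl\|(\lambda I+L_K)^{1/2}(\lambda I+L_{K,D'})^{-1/2}\bigr\| \cdot \bigl\|(\lambda I+L_{K,D'})^{-3/2}\bigr\| \cdot \|f_\lambda^{tr}\|_K.
\end{equation*}

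Next I would control each factor. For the first factor, I would invoke the operator-monotonicity inequality $\|A^{s}B^{-s}\|\le \|AB^{-1}\|^{s}$ for $0\le s\le 1$ (the same inequality already used in the proof of Proposition \ref{prop: error decomposition2}) with $s=1/2$, $A=\lambda I+L_K$, $B=\lambda I+L_{K,D'}$. This gives
\begin{equation*}
\bigl\|(\lambda I+L_K)^{1/2}(\lambda I+L_{K,D'})^{-1/2}\bigr\| \le \Omega_{D'}^{1/2}.
\end{equation*}
For the second factor, since $L_{K,D'}$ is a positive operator, the spectral bound gives $\|(\lambda I+L_{K,D'})^{-3/2}\|\le \lambda^{-3/2}$. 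For the third factor, Lemma \ref{lem: flambda bound} yields $\|f_\lambda^{tr}\|_K\le \lambda^{r-1/2}\|u^*\|_\L2p$.

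Assembling these bounds and collecting powers of $\lambda$, the exponent is $2-\tfrac{3}{2}-\tfrac{1}{2}+r = r$, so
\begin{equation*}
I_2 \le \Omega_{D'}^{1/2}\,\lambda^r\,\|u^*\|_\L2p.
\end{equation*}
Finally, by Proposition \ref{prop: operator product bound} applied to the sample $D'$, with confidence at least $1-\delta/2$ one has $\Omega_{D'}^{1/2} \le \frac{\B_{|D'|,\lambda}\log(4/\delta)}{\sqrt{\lambda}}+1 \le \bigl(\tfrac{\B_{|D'|,\lambda}}{\sqrt{\lambda}}+1\bigr)\log\tfrac{4}{\delta}$, which matches the claimed bound and uses exactly the same confidence set as in \eqref{estimation of Omega}. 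There is no serious obstacle here: the only real choice is the split of the exponents, and the bookkeeping is designed so that the two factors of $\lambda^{-1/2}$ produced by $(\lambda I+L_{K,D'})^{-3/2}$ are absorbed by $\lambda^2$ and by the $\lambda^{r-1/2}$ coming from $\|f_\lambda^{tr}\|_K$, leaving a clean $\lambda^r$.
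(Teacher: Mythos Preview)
Your proposal is correct and follows essentially the same approach as the paper: the same factorization $I_2\le \lambda^2\|(\lambda I+L_K)^{1/2}(\lambda I+L_{K,D'})^{-1/2}\|\cdot\|(\lambda I+L_{K,D'})^{-3/2}\|\cdot\|f_\lambda^{tr}\|_K$, the same use of $\Omega_{D'}^{1/2}$, the spectral bound $\lambda^{-3/2}$, and Lemma~\ref{lem: flambda bound}, followed by the high-probability bound on $\Omega_{D'}$ from Proposition~\ref{prop: operator product bound}.
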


\begin{proof} Now we are in a position to estimate the term $I_2,$ we decompose the term as
\begin{eqnarray*}
I_2&= &\|\lambda^2(\lambda I +L_K)^{\frac12}(\lambda I +L_{K,D'})^{-2} f_\lambda^{tr}\|_K\\
&= & \lambda^2\left\|(\lambda I +L_K)^{\frac12}(\lambda I +L_{K,{D'}})^{-\frac12}(\lambda I +L_{K,{D'}})^{-\frac32} f_\lambda^{tr} \right\|_K\\
&\le & \lambda^2 \|(\lambda I +L_K)^{\frac12}(\lambda I +L_{K,{D'}})^{-\frac12}\| \|(\lambda I +L_{K,D'})^{-\frac32}\| \|f_\lambda^{tr}\|_K\\
&\le & \lambda^r (\Omega_{D'})^{\frac12}\|u^*\|_\L2p,
\end{eqnarray*}
where we have used the bounds $\|(\lambda I +L_{K,D'})^{-\frac32}\|\le \lambda^{-\frac32}$ and $\|f_\lambda^{tr}\|_K\le \lambda^{r-\frac12}\|u^*\|_\L2p.$
By \eqref{estimation of Omega}, we obtain the desired bound and confidence set.
\end{proof}

Now we can prove Theorem \ref{thm:error1}.\\[2mm]

\noindent{\bf Proof of Theorem \ref{thm:error1}.}\
Plugging the bounds of $I_1$, $I_2,$ and $I_3$ into  Proposition \ref{prop: error decomposition1},
we obtain the error bound for $ \|f_{D',\lambda}^\sharp-f^*\|_\L2p$ in \eqref{eq:semibound}.

If $\lambda=|D|^{-\frac{1}{2r+\beta}}$ with $0<r\le\frac12$ and  $\mathcal{N}(\lambda)\le C_0\lambda^{-\beta},$
we have
\begin{eqnarray*}
B_{|D|,\lambda}\le  \frac{2\kappa}{\sqrt{|D|}}\left\{{\kappa} |D|^{-\frac{1}{2}}|D|^{\frac{1}{2(2r+\beta)}}
          +\sqrt{C_0}|D|^{\frac{\beta}{2(2r+\beta)}}\right\}
          \le 2\kappa(\kappa
          +\sqrt{C_0})|D|^{\frac{r}{2r+\beta}}.
\end{eqnarray*}
Under the condition $|D'|\ge |D|^{\frac{1+\beta}{2r+\beta}},$ we have
\begin{eqnarray*}
\frac{\B_{|D'|,\lambda}}
           {\sqrt{\lambda}}&&=\frac{2\kappa}{\sqrt{|D'|\lambda}}\left\{\frac{\kappa}{\sqrt{|D'|\lambda}}
          +\sqrt{\mathcal{N}(\lambda)}\right\}\\
          &&\le \frac{2\kappa}{\sqrt{|D'|}}|D|^{\frac{1}{2(2r+\beta)}}\left\{{\kappa} |D'|^{-\frac{1}{2}}|D|^{\frac{1}{2(2r+\beta)}}
          +\sqrt{C_0}|D|^{\frac{\beta}{2(2r+\beta)}}\right\}\\
          &&\le  \frac{2\kappa}{\sqrt{|D'|}}|D|^{\frac{1}{2(2r+\beta)}}({\kappa}
          +\sqrt{C_0})|D|^{\frac{\beta}{2(2r+\beta)}} \\
          && \le {2\kappa}({\kappa} +\sqrt{C_0})
\end{eqnarray*}
Applying these two estimates to \eqref{eq:semibound}, we have for any $\delta\in(0,1),$ with confidence at least $1-\delta,$
\begin{eqnarray*}
\|f_{D',\lambda}^\sharp-f^*\|_\L2p\le
C_1|D|^{-\frac{r}{2r+\beta}}\left(\log\frac{4}{\delta}\right)^3
\end{eqnarray*}
where $$C_1=\left(\frac{2M }{\kappa}+ 4\|u^*\|_\L2p\right)\left[{2\kappa}({\kappa}
          +\sqrt{C_0}) +1\right]^3\left[2\kappa(\kappa
          +\sqrt{C_0})+1\right].$$
This completes the proof of Theorem \ref{thm:error1}.
\hfill\BlackBox\\[2mm]

Note we also proved Theorem \ref{thm:error2} with $0<r<\frac 12$ because it is a special case of
Theorem \ref{thm:error1} with $D'=D$. So we are in position to  prove Corollary \ref{rate} (i). \\[2mm]

\noindent{\bf Proof of  Corollary \ref{rate} (i).}\
 When $0<r\le \frac 12$, take $\lambda = |D|^{-\frac{1}{1+\beta}}.$ Then
$$ B_{|D|,\lambda}\le  \frac{2\kappa}{\sqrt{|D|}}     \left\{    \frac{ \kappa |D|^{\frac{1}{2(1+\beta)}} } {\sqrt{|D|} }
          +\sqrt{C_0}|D|^{\frac{\beta}{2(1+\beta)}}\right\}
          \le 2\kappa(\kappa +\sqrt{C_0})|D|^{-\frac{1}{2(1+\beta)}}
            \le 2\kappa(\kappa +\sqrt{C_0})|D|^{-\frac{r}{1+\beta}}.
$$
and
 $$ \frac{B_{|D|,\lambda}}{\sqrt \lambda}  \le
 2\kappa(\kappa +\sqrt{C_0})|D|^{-\frac{1}{2(1+\beta)}} |D|^{\frac{1}{2(1+\beta)}}
   =      2\kappa(\kappa +\sqrt{C_0}).
$$
Plugging them into the estimation \eqref{eq:boundP}
we obtain the desired learning rate.
\hfill\BlackBox\\[2mm]

\section{Error analysis for distributed BCRKN}
\label{sec:distributed}

The following lemma is analogous to the \cite[Proposition 4]{GLZ16}.
\begin{lemma}
Let $\overline{f}_{D,\lambda}^\sharp$ be defined by \eqref{distributedlearningalgorithm}. We have
\begin{equation}\label{Prop on error decomp}
        \bE\left[\|\overline{f}_{D,\lambda}^\sharp-f^*\|_\L2p^2\right]
        \leq
         \sum_{j=1}^m\frac{|D_j|^2}{|D|^2} \bE\left[\|f_{D_j,\lambda}^\sharp-f^*\|_\L2p^2\right]
        +\sum_{j=1}^m\frac{|D_j|}{|D|}
        \left\| \bE[f_{D_j,\lambda}^\sharp]-f^*\right\|_\L2p^2.
\end{equation}
\end{lemma}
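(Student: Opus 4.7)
The plan is to perform a standard bias/variance decomposition of the weighted average $\overline{f}_{D,\lambda}^\sharp$, exploiting the mutual independence of the data blocks $D_1,\ldots,D_m$. Let $w_j=|D_j|/|D|$, so that $\sum_{j=1}^m w_j=1$ and $\overline{f}_{D,\lambda}^\sharp=\sum_{j=1}^m w_j f_{D_j,\lambda}^\sharp$. I will first write
\[
\bE\bigl[\|\overline{f}_{D,\lambda}^\sharp-f^*\|_\L2p^2\bigr]
= \bigl\|\bE[\overline{f}_{D,\lambda}^\sharp]-f^*\bigr\|_\L2p^2
+ \bE\bigl[\|\overline{f}_{D,\lambda}^\sharp-\bE[\overline{f}_{D,\lambda}^\sharp]\|_\L2p^2\bigr],
\]
the usual bias/variance identity in the Hilbert space $\L2p$.

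For the bias term, linearity of expectation together with $\sum_j w_j=1$ gives $\bE[\overline{f}_{D,\lambda}^\sharp]-f^*=\sum_{j=1}^m w_j\bigl(\bE[f_{D_j,\lambda}^\sharp]-f^*\bigr)$; then Jensen's inequality (equivalently convexity of $\|\cdot\|_\L2p^2$ applied to the probability weights $w_j$) yields
\[
\bigl\|\bE[\overline{f}_{D,\lambda}^\sharp]-f^*\bigr\|_\L2p^2
\le \sum_{j=1}^m w_j\bigl\|\bE[f_{D_j,\lambda}^\sharp]-f^*\bigr\|_\L2p^2,
\]
which is the second sum on the right-hand side of \eqref{Prop on error decomp}.

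For the variance term, I will use that the blocks $D_1,\ldots,D_m$ are mutually independent, so the centered random elements $f_{D_j,\lambda}^\sharp-\bE[f_{D_j,\lambda}^\sharp]$ are independent zero-mean in the Hilbert space $\L2p$. Hence cross-terms vanish and
\[
\bE\bigl[\|\overline{f}_{D,\lambda}^\sharp-\bE[\overline{f}_{D,\lambda}^\sharp]\|_\L2p^2\bigr]
= \sum_{j=1}^m w_j^2\,\bE\bigl[\|f_{D_j,\lambda}^\sharp-\bE[f_{D_j,\lambda}^\sharp]\|_\L2p^2\bigr].
\]
Finally, since the variance is upper bounded by the mean squared error, $\bE[\|f_{D_j,\lambda}^\sharp-\bE[f_{D_j,\lambda}^\sharp]\|_\L2p^2]\le \bE[\|f_{D_j,\lambda}^\sharp-f^*\|_\L2p^2]$, which produces the first sum. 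Combining the two bounds gives \eqref{Prop on error decomp}.

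There is no real obstacle here; this is a textbook decomposition. The only point worth being careful about is that the independence used in the variance step is independence across the blocks (the blocks are drawn i.i.d.\ from $\rho$), not independence of $f_{D_j,\lambda}^\sharp$ from itself across the randomness; this is what lets us drop the cross-covariance terms in the Hilbert-space setting.
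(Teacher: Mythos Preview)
Your proof is correct and is exactly the standard bias--variance decomposition one expects here; the paper itself does not prove this lemma but simply cites \cite[Proposition~4]{GLZ16}, where the argument is essentially the one you give.
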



\noindent{\bf Proof of Theorem \ref{thm: distributed main result1}.}\
By Theorem \ref{thm:error2}, for each fixed   $j\in\{1,2,\ldots,m\}$,
\begin{eqnarray*}
     \bE\left[\|f_{D_j,\lambda}^\sharp-f^*\|^2_\L2p\right]\leq
    4\Gamma(9)C^2 \left(\frac{\mathcal B_{|D_j|,\lambda}}{\sqrt{\lambda}} +1\right)^6
     \left(\mathcal  B_{|D_j|,\lambda}+\lambda^r\right)^2.
\end{eqnarray*}
Then the first term on the right of  \eqref{Prop on error decomp} can be
estimated as
\begin{eqnarray}
   \sum_{j=1}^m\frac{|D_j|^2}{|D|^2}E\left[\|f_{D_j,\lambda}^\sharp-f^*\|^2_\L2p\right]
    \leq
    4\Gamma(9)C^2  \sum_{j=1}^m\frac{|D_j|^2}{|D|^2}
    \left(\frac{\mathcal B_{|D_j|,\lambda}}{\sqrt{\lambda}} +1\right)^6
     \left(\mathcal  B_{|D_j|,\lambda}+\lambda^r\right)^2.
      \label{Bound for Sample error for d smaller1.5}
\end{eqnarray}

We turn to estimate the second term on the right of \eqref{Prop on error decomp}.
For each fixed $j\in\{1,2,\ldots,m\}$, by  Jensen's inequality, we
have
\[
              \left\| \bE[{f}_{D_j,\lambda}^\sharp]-f^*\right\|_\L2p
           \leq
            \bE  \left[\|\bE^*[ {f}_{D_j,\lambda}^\sharp]-f^*\|_\L2p\right].
\]
We consider the second terms into two cases according to the range of $r$. We first consider the case when $\frac12\le r\le \frac32.$
The bound \eqref{E* case1} in the proof of Proposition
\ref{prop: error decomposition2} tells us that
\[
      \left\|\bE^*[f_{D_j,\lambda}^\sharp]-f^*\right\|_\L2p\leq
      \lambda^r \Omega_{D_j}^{r}  \|u^*\|_\L2p.
\]
It follows  that
\begin{equation}\label{D.11}
      \|
           \bE[{f}_{D_j,\lambda}^\sharp]-f^*\|_\L2p
            \leq \lambda^r   \|u^*\|_\L2p \bE\left[\Omega_{D_j}^r\right].
\end{equation}
Applying Proposition \ref{prop: operator product bound} to each fixed $j\in\{1,\dots,m\}$, with
confidence at least $1-\frac{\delta}{2}$, there holds
\[
            \Omega_{D_j}\leq \left(\frac{\mathcal B_{|D_j|,\lambda}\log\frac{4}{\delta}}
           {\sqrt{\lambda}}   +1\right)^2\le \left(\frac{\mathcal B_{|D_j|,\lambda}}
           {\sqrt{\lambda}}  +1 \right)^2\left(\log\frac{4}{\delta}\right)^2.
\]
By Lemma \ref{P2E}, this implies that for any $s>0$,
\begin{eqnarray}
        \bE\left[\Omega_{D_j}^s\right]  \le 2 \Gamma(2s+1)
        \left(\frac{\mathcal B_{|D_j|,\lambda}} {\sqrt{\lambda}}   +1\right)^{2s}.
\label{OmegeD Expectation}
\end{eqnarray}
Applying \eqref{OmegeD Expectation} with $s=r$ to  \eqref{D.11}  yields
\begin{equation}\label{Bound for approximation error small 1.5}
          \|\bE[{f}_{D_j,\lambda}^\sharp]-f^*\|_\L2p\leq
          2\Gamma(2r+1)\|u^*\|_\L2p\lambda^r \left(\frac{\mathcal B_{|D_j|,\lambda}}
           {\sqrt{\lambda}} +1\right)^{2r}.
\end{equation}
Combining \eqref{Prop on error decomp}, \eqref{Bound for Sample error for d smaller1.5} and
\eqref{Bound for approximation error small 1.5}, we have
\begin{eqnarray*}
         \bE[\|\overline{f}_{D,\lambda}^\sharp-f^*\|_\L2p^2]
        & \leq &
        4\Gamma(9)C^2
      \sum_{j=1}^m\frac{|D_j|^2}{|D|^2}\left(\frac{\mathcal B_{|D_j|,\lambda}}{\sqrt{\lambda}} +1\right)^6
      \left(\mathcal  B_{|D_j|,\lambda}+\lambda^r\right)^2\\
      && \quad +
       4   \Gamma^2(2r+1)\|u^*\|^2_\L2p \lambda^{2r}\sum_{j=1}^m\frac{|D_j|}{|D|}
          \left(\frac{\mathcal B_{|D_j|,\lambda}} {\sqrt{\lambda}}
          +1\right)^{4r}.
\end{eqnarray*}
This proves the desired bound for $\frac12\leq r\le \frac32$.

For $r\in(\frac32,2],$ by the bound \eqref{E* case2} in the proof of Proposition \ref{prop: error decomposition2-case3}, we have
\[
      \|\bE^*[f_{D_j,\lambda}^\sharp]-f^*\|_\L2p\leq
      \lambda \Xi_{D_j} \Omega_{D_j}^{\frac32} \kappa^{2r-3} \|u^*\|_\L2p+\lambda^r \Omega_{D_j} \|u^*\|_\L2p.
\]
So,
\begin{eqnarray*}
     \bE[\|\bE^*[f_{D_j,\lambda}^\sharp]-f^*\|_\L2p]
     &\le &      \lambda \bE\left[\Xi_{D_j} (\Omega_D)^{\frac32}\right]
     \kappa^{2r-3} \|u^*\|_\L2p+\lambda^r \bE[\Omega_{D_j}] \|u^*\|_\L2p\\
      &\le & \lambda \left(\bE\left[\Xi_{D_j}^2\right]\right)^{\frac12}
      \left( \bE\left[\Omega_{D_j}^{3}\right]\right)^\frac12 \kappa^{2r-3} \|u^*\|_\L2p+\lambda^r \bE[\Omega_{D_j}] \|u^*\|_\L2p.
\end{eqnarray*}
From \citep{LGZ16}, we have
\begin{eqnarray*}
\bE\left[\left\|(\lambda I+L_{K})^{-\frac12} (L_K-L_{K,D})\right\|^2\right]\le \frac{ \kappa^2\mathcal{N}(\lambda)}{|D|}
\end{eqnarray*}
and by \eqref{OmegeD Expectation} with $s=3,$ we have
\begin{eqnarray*}
\bE\left[\Omega_{D_j}^{3}\right]\le 2\Gamma(7)\left(\frac{\mathcal B_{|D_j|,\lambda}}
           {\sqrt{\lambda}}   +1\right)^{6}.
\end{eqnarray*}
Therefore,
\begin{eqnarray}
 \|\bE[f_{D_j,\lambda}^\sharp]-f^*\|_\L2p^2
     &\le &2\lambda^2 E\left[\Xi_{D_j}^2\right] E\left[\Omega_{D_j}^{3}\right] \kappa^{4r-6} \|u^*\|_\L2p^2+ 2\lambda^{2r} (\bE[\Omega_{D_j}])^2 \|u^*\|_\L2p^2 \nonumber\\
      &\le &2\lambda^2 \frac{ \kappa^2\mathcal{N}(\lambda)}{|D_j|} 2\Gamma(7)\left(\frac{\mathcal B_{|D_j|,\lambda}}
           {\sqrt{\lambda}}  +1\right)^{6}\kappa^{4r-6} \|u^*\|_\L2p^2 \nonumber \\
           & & +32\lambda^{2r}\|u^*\|_\L2p^2 \left(\frac{\mathcal B_{|D_j|,\lambda}}
           {\sqrt{\lambda}}   +1\right)^4 \nonumber\\
      &\le & \left(4\Gamma(7)\kappa^{4r-4}+32\right)\|u^*\|_\L2p^2 \left(\frac{\mathcal B_{|D_j|,\lambda}}
           {\sqrt{\lambda}}  +1\right)^6 \left(\frac{\lambda^2\mathcal{N}(\lambda)}{|D_j|}+\lambda^{2r}\right).
           \label{eq:fdjcase2}
\end{eqnarray}
Combining \eqref{Prop on error decomp}, \eqref{Bound for Sample error for d smaller1.5} and \eqref{eq:fdjcase2},
we have
\begin{eqnarray*}
         \bE[\|\overline{f}_{D,\lambda}^\sharp-f^*\|_\L2p^2]& \le &
        4\Gamma(9)C^2
      \sum_{j=1}^m\frac{|D_j|^2}{|D|^2}\left(\frac{\mathcal B_{|D_j|,\lambda}}{\sqrt{\lambda}}
            +1\right)^{6} \left(\mathcal B_{|D_j|,\lambda}+\lambda^r\right)^2\\
      &&  +
        \left(4\Gamma(7)\kappa^{4r-4}+32\right)\|u^*\|_\L2p^2\sum_{j=1}^m\frac{|D_j|}{|D|}
        \left(\frac{\mathcal B_{|D_j|,\lambda}}
           {\sqrt{\lambda}}   +1\right)^6\left(\frac{\lambda^2\mathcal{N}(\lambda)}{|D_j|}+\lambda^{2r}\right).
\end{eqnarray*}
This proves the conclusion for $\frac 32\le r \le 2.$
\hfill\BlackBox\\[2mm]

Next let us turn to the special case that the local machines are assigned the same number of samples, i.e., $|D_1|=|D_2|=\dots=|D_m|.$\\[2mm]

\noindent{\bf Proof of Theorem \ref{thm: distributed main result2}.}\
For $\frac12\leq r\leq2$, we choose $\lambda=|D|^{-\frac{1}{2r+\beta}}.$
Then by the capacity assumption \eqref{effecdim},
restriction on the number of local machines $m\leq |D|^{\min\{\frac{2r-1}{2r+\beta},\frac2{2r+\beta}\}},$
and the fact $|D_1|=|D_2|=\ldots=|D_m| = \frac {|D|}{m},$ we have
\[
         \frac{\mathcal N(\lambda)}{\lambda|D_j|}\leq C_0m
         N^{\frac{1-2r}{2r+\beta}}\leq C_0.
\]
It follows  that, for each $j=1, \ldots, m$,
\[
       \frac{\mathcal{B}_{|D_j|,\lambda}}{\sqrt{\lambda}}
       =\frac{2\kappa}{\sqrt{\lambda|D_j|}}\left\{\frac{\kappa}{\sqrt{|D_j|\lambda}}
       +\sqrt{\mathcal{N}(\lambda)}\right\}
       \leq
       2\kappa(\kappa+\sqrt{C_0})
\]
and
\[
         \frac{|D_j|}{|D|}\mathcal
      B^2_{|D_j|,\lambda}\leq 8\kappa^2\left(\frac{\kappa^2}{|D||D_j|\lambda}+\frac{\mathcal
      N(\lambda)}{|D|}\right)\leq 8\kappa^2(\kappa^2+{C_0}) |D|^{-\frac{2r}{2r+\beta}}.
\]
and $$\frac{\lambda^{2}\mathcal{N}(\lambda)}{{|D_j|}}\le\frac{C_0|D|^{-\frac{2}{2r+\beta}}|D|^{\frac{\beta}{2r+\beta}}m}{|D|}\le  C_0 |D|^{-\frac{2r}{2r+\beta}}.$$
Then by Theorem \ref{thm: distributed main result1},
\[
          \bE[\|\overline{f}_{D,\lambda}^\sharp-f^*\|_\L2p^2]
          \leq
          \bar{C}\left(2\kappa(\kappa+\sqrt{C_0})  +1\right)^6 \left(  16\kappa^2(\kappa^2+{C_0})+C_0+1 \right)
          |D|^{-\frac{2r}{2r+\beta}}.
\]
 This completes the proof of Theorem \ref{thm: distributed main result2}.
\hfill\BlackBox\\[2mm]

\acks{%
The work described in this paper is partially supported by the
National Natural Science Foundation of China
(Grants No.11401524, 11531013, 11571078, 11631015, 11671171).
Lei Shi is also supported by the Joint Research Fund by
National Natural Science Foundation of China and
Research Grants Council of Hong Kong
(Project No. 11461161006 and Project No. CityU 104012) and
Zhuo Xue Program of Fudan University.
Part of the work was carried out while Zheng-Chu Guo was visiting
Shanghai Key Laboratory for Contemporary Applied Mathematics.
All authors contributed equally to this paper and are listed alphabetically.
The corresponding author is Qiang Wu.
}

\bibliographystyle{abbrv}

\begin{thebibliography}{10}

\bibitem{Aron}
N.~Aronszajn.
\newblock Theory of reproducing kernels.
\newblock {\em Transactions of the American Mathematical Society}, 68:337--404,
  1950.

\bibitem{bauer2007regularization}
F.~Bauer, S.~Pereverzev, and L.~Rosasco.
\newblock On regularization algorithms in learning theory.
\newblock {\em Journal of complexity}, 23(1):52--72, 2007.

\bibitem{Blanchard2010}
G.~Blanchard and N.~Kr{\"a}mer.
\newblock Optimal learning rates for kernel conjugate gradient regression.
\newblock In {\em Advances in Neural Information Processing Systems}, pages
  226--234, 2010.

\bibitem{bousquet2002stability}
O.~Bousquet and A.~Elisseeff.
\newblock Stability and generalization.
\newblock {\em Journal of Machine Learning Research}, 2:499--526, 2002.

\bibitem{Caponnetto2006}
A.~Caponnetto.
\newblock Optimal rates for regularization operators in learning theory.
\newblock Technical report, 2006.

\bibitem{caponnetto2007optimal}
A.~Caponnetto and E.~De~Vito.
\newblock Optimal rates for the regularized least-squares algorithm.
\newblock {\em Foundations of Computational Mathematics}, 7(3):331--368, 2007.

\bibitem{devito2005model}
E.~De~Vito, A.~Caponnetto, and L.~Rosasco.
\newblock Model selection for regularized least-squares algorithm in learning
  theory.
\newblock {\em Foundations of Computational Mathematics}, 5(1):59--85, 2005.

\bibitem{devore2004mathematical}
R.~DeVore, G.~Kerkyacharian, D.~Picard, and V.~Temlyakov.
\newblock Mathematical methods for supervised learning.
\newblock {\em IMI Preprints}, 22:1--51, 2004.

\bibitem{EPP}
T.~Evgeniou, M.~Pontil, and T.~Poggio.
\newblock Regularization networks and support vector machines.
\newblock {\em Advances in Computational Mathematics}, 13:1--50, 2000.

\bibitem{GFZ2016}
X.~Guo, J.~Fan, and D.-X. Zhou.
\newblock Sparsity and error analysis of empirical feature-based regularization
  schemes.
\newblock {\em The Journal of Machine Learning Research}, 17(1):3058--3091,
  2016.

\bibitem{GLZ16}
Z.-C. Guo, S.-B. Lin, and D.-X. Zhou.
\newblock Learning theory of distributed spectral algorithms.
\newblock {\em Inverse Problems}, 2017.

\bibitem{GXGZ2017}
Z.-C. Guo, D.-H. Xiang, X.~Guo, and D.-X. Zhou.
\newblock Thresholded spectral algorithms for sparse approximations.
\newblock {\em Analysis and Applications}, 15(3):433--455, 2017.

\bibitem{gyorfi2006distribution}
L.~Gy{\"o}rfi, M.~Kohler, A.~Krzyzak, and H.~Walk.
\newblock {\em A distribution-free theory of nonparametric regression}.
\newblock Springer Science \& Business Media, 2006.

\bibitem{LGZ16}
S.~Lin, X.~Guo, and D.-X. Zhou.
\newblock Distributed learning with least square regularization.
\newblock {\em Preprint}, 2016.

\bibitem{rosenblatt2016optimality}
J.~D. Rosenblatt and B.~Nadler.
\newblock On the optimality of averaging in distributed statistical learning.
\newblock {\em Information and Inference: A Journal of the IMA}, 5(4):379--404,
  2016.

\bibitem{smale2007learning}
S.~Smale and D.~X. Zhou.
\newblock Learning theory estimates via integral operators and their
  approximations.
\newblock {\em Constructive Approximation}, 26:153--172, 2007.

\bibitem{steinwart2009optimal}
I.~Steinwart, D.~R. Hush, and C.~Scovel.
\newblock Optimal rates for regularized least squares regression.
\newblock In {\em COLT}, 2009.

\bibitem{sun2009note}
H.~Sun and Q.~Wu.
\newblock A note on application of integral operator in learning theory.
\newblock {\em Applied and Computational Harmonic Analysis}, 26(3):416--421,
  2009.

\bibitem{temlyakov2008approximation}
V.~Temlyakov.
\newblock Approximation in learning theory.
\newblock {\em Constructive Approximation}, 27(1):33--74, 2008.

\bibitem{wahba1990spline}
G.~Wahba.
\newblock {\em Spline models for observational data}.
\newblock SIAM, 1990.

\bibitem{wu2017bias}
Q.~Wu.
\newblock Bias corrected regularization kernel network and its applications.
\newblock In {\em International Joint Conference on Neural Networks (IJCNN)},
  pages 1072--1079, 2017.

\bibitem{wu2006learning}
Q.~Wu, Y.~Ying, and D.-X. Zhou.
\newblock Learning rates of least-square regularized regression.
\newblock {\em Foundations of Computational Mathematics}, 6(2):171--192, 2006.

\bibitem{zhang2003leave}
T.~Zhang.
\newblock Leave-one-out bounds for kernel methods.
\newblock {\em Neural Computation}, 15(6):1397--1437, 2003.

\bibitem{zhang2013divide}
Y.~Zhang, J.~Duchi, and M.~Wainwright.
\newblock Divide and conquer kernel ridge regression.
\newblock In {\em Conference on Learning Theory}, pages 592--617, 2013.

\bibitem{zhang2015divide}
Y.~Zhang, J.~C. Duchi, and M.~J. Wainwright.
\newblock Divide and conquer kernel ridge regression: a distributed algorithm
  with minimax optimal rates.
\newblock {\em Journal of Machine Learning Research}, 16:3299--3340, 2015.

\end{thebibliography}

\end{document}